\documentclass{article}

\usepackage{arxiv}

\usepackage[utf8]{inputenc} % allow utf-8 input
\usepackage[T1]{fontenc}    % use 8-bit T1 fonts
\usepackage{hyperref}       % hyperlinks
\usepackage{url}            % simple URL typesetting
\usepackage{booktabs}       % professional-quality tables
\usepackage{amsfonts}       % blackboard math symbols
\usepackage{nicefrac}       % compact symbols for 1/2, etc.
\usepackage{microtype}      % microtypography
\usepackage{lipsum}		% Can be removed after putting your text content
\usepackage{graphicx}
\usepackage{natbib}
\usepackage{doi}

\usepackage{hyperref}
\usepackage{url}
\usepackage{graphicx}
\usepackage{booktabs}
\usepackage{enumitem}
\usepackage{amsmath}
\usepackage{wrapfig}
\usepackage{mathtools}
\usepackage{float}
\usepackage{color, colortbl}
\usepackage{amsthm}
\usepackage{thmtools}
\usepackage{thm-restate}
\usepackage{epstopdf}
\usepackage{bbm}
\usepackage{dsfont}
\usepackage{threeparttable}
\usepackage{wrapfig,lipsum,booktabs}
\usepackage[capitalize,noabbrev]{cleveref}

\definecolor{Gray}{gray}{0.925}
\definecolor{Green}{rgb}{0.96,1,0.96}

\newtheorem{theorem}{Theorem}[section]

\newtheorem{remark}[theorem]{Remark}

\newcommand{\eg}{\textit{e}.\textit{g}.}

\newcommand{\defeq}{\vcentcolon=}

\newcommand{\vtrain}{V_\text{train}}
\newcommand{\infl}{\mathcal{I}}
\newcommand{\normreg}{\frac{\lambda}{2}\|\theta\|_2^2}
\newcommand{\eij}{e_{ij}}
\newcommand{\opt}{{\hat{\theta}}}
\newcommand{\normadj}{\tilde{\mathbf{A}}}
\newcommand{\grad}{\nabla_{\hat{\theta}}\ell}
\newcommand{\eigen}{\sigma_\text{min}}
\newcommand{\hessinvtheta}{\mathbf{H}_{\hat{\theta}}^{-1}}

\newcommand{\ptheta}{\hat{\theta}_{\varepsilon}}
\newcommand{\celoss}{\mathcal{L}}

\DeclareMathOperator*{\argmax}{arg\,max}
\DeclareMathOperator*{\argmin}{arg\,min}

\title{Characterizing the Influence of Graph Elements}

\author{
  \vspace{1mm} Zizhang Chen, Peizhao Li, Hongfu Liu and Pengyu Hong \\
  \vspace{1mm} Brandeis University \\
  \texttt{\{zizhang2,peizhaoli,hongfuliu,hongpeng\}@brandeis.edu} \\
}

% % Uncomment to override  the `A preprint' in the header
% %\renewcommand{\headeright}{Technical Report}
% %\renewcommand{\undertitle}{Technical Report}
% \renewcommand{\shorttitle}{}

\begin{document}
\maketitle

\begin{abstract}
Influence function, a method from robust statistics, measures the changes of model parameters or some functions about model parameters concerning the removal or modification of training instances. It is an efficient and useful post-hoc method for studying the interpretability of machine learning models without the need for expensive model re-training. Recently, graph convolution networks (GCNs), which operate on graph data, have attracted a great deal of attention. However, there is no preceding research on the influence functions of GCNs to shed light on the effects of removing training nodes/edges from an input graph. Since the nodes/edges in a graph are interdependent in GCNs, it is challenging to derive influence functions for GCNs. To fill this gap, we started with the simple graph convolution (SGC) model that operates on an attributed graph and formulated an influence function to approximate the changes in model parameters when a node or an edge is removed from an attributed graph. Moreover, we theoretically analyzed the error bound of the estimated influence of removing an edge. We experimentally validated the accuracy and effectiveness of our influence estimation function. In addition, we showed that the influence function of an SGC model could be used to estimate the impact of removing training nodes/edges on the test performance of the SGC without re-training the model. Finally, we demonstrated how to use influence functions to guide the adversarial attacks on GCNs effectively.
\end{abstract}

\section{Introduction}
\label{sec:intro}
Graph data is pervasive in real-world applications, such as, online recommendations~\citep{shalaby2017help,huang2021graph,li2021on}, drug discovery~\citep{takigawa2013graph,li2017learning}, and knowledge management~\citep{rizun2019knowledge,wang2018acekg}, to name a few. The growing need to analyze huge amounts of graph data has inspired work that combines Graph Neural Networks with deep learning~\citep{Gori2005,Scarselli2005,li2015gated,hamilton2017inductive,xu2018powerful,jiang2019graph}. Graph Convolutional Networks (GCNs)~\citep{kipf2016semi,zhang2018link,fan2019graph}, the most cited GNN architecture, adopts convolution and message passing mechanisms.

To better understand GCNs from a data-centric perspective, we consider the following question:
\vspace{-2mm}
\begin{center}
\textit{Without model retraining, how can we estimate the changes of parameters in GCNs \\ when the graph used for learning is perturbed by edge- or node-removals?}
\end{center}
\vspace{-2mm}
This question proposes to estimate counterfactual effects on the parameters of a well-trained model when there is a manipulation in the basic elements in a graph, where the ground truth of such an effect should be obtained from model retraining. With a computational tool as the answer, we can efficiently manipulate edges or nodes in a graph to control the change of model parameters of trained GCNs. The solution would provide extensions like increasing model performance, improving model generalization, and graph data poison attacks through pure data modeling. Yet, current methods for training GCNs offer limited interpretability of the interactions between the training graph and the GCN model. More specifically, we fall short of understanding the influence of the input graph elements on both the changes in model parameters and the generalizability of a trained model ~\citep{ying2019gnnexplainer,huang2022graphlime,yuan2021explainability, xu2019topology, zheng2021graph}.

In the regime of robust statistics, an analyzing tool called influence functions~\citep{hampel1974influence,koh2017understanding} is proposed to study the counterfactual effect between training data and model performance. For independent and identically distributed (i.i.d.) data, influence functions estimate the model's change when there is an infinitesimal perturbation added to the training distribution, \eg, a reweighing on some training instances. However, unlike i.i.d. data, manipulation on a graph would incur a knock-on effect through GCNs. For example, an edge removal will break down all message passing that is supposed to pass through this edge and consequentially change node representations and affect the final model optimization. Therefore, introducing influence functions to graph data and GCNs is non-trivial work and requires extra considerations.

In this work, we aim to derive influence functions for GCNs. As the first attempt in this direction, we focused on Simple Graph Convolution~\citep{wu2019simplifying}. Our contributions are three-fold:
\vspace{-2mm}
\begin{itemize}
    \item We derived influence functions for Simple Graph Convolution. Based on influence functions, we developed computational approaches to estimate the changes in model parameters caused by two basic perturbations: edge removal and node removal.
    \vspace{-1mm}
    \item We derived the theoretical error bounds to characterize the gap between the estimated changes and the actual changes in model parameters in terms of both edge and node removal.
    \vspace{-1mm}
    \item We show that our influence analysis on the graph can be utilized to \textbf{(1)} Rectify the training graph to improve model testing performance, and \textbf{(2)} guide adversarial attacks to SGC or conduct grey-box attacks on GCNs via a surrogate SGC.
\end{itemize}

\section{Preliminaries}
\label{sec:pre}

In the following sections, we use a lowercase $x$ for a scalar or an entity, an uppercase $X$ for a constant or a set, a bolder lowercase $\mathbf{x}$ for a vector, and a bolder uppercase $\mathbf{X}$ for a matrix.

\paragraph{Influence Functions}

Influence functions~\citep{hampel1974influence} estimate the change in model parameters when the empirical weight distribution of i.i.d. training samples is perturbed infinitesimally. Such estimations are computationally efficient compared to learn-one-out retraining iterating every training sample. For $N$ training instances $\mathbf{x}$ and label $y$, consider empirical risk minimization (ERM) 
$\hat{\theta} = \argmin_{\theta\in\Theta} \frac{1}{N} \sum_{\mathbf{x},y} \ell(\mathbf{x},y) + \normreg$ for some loss function $\ell(\cdot,\cdot)$ through a parameterized model $\theta$ and with a regularization term. When down weighing a training sample $(\mathbf{x}_i,y_i)$ by an infinitely small fraction $\epsilon$, the substitutional ERM can be expressed as $\hat{\theta}(\mathbf{x}_i;-\epsilon) = \argmin_{\theta\in\Theta} \frac{1}{N} \sum_{\mathbf{x},y}\ell(\mathbf{x},y) - \epsilon \ell(\mathbf{x}_i,y_i) + \normreg$. Influence functions estimate the actual change $\infl^*(\mathbf{x}_i;-\epsilon)=\hat{\theta}(\mathbf{x}_i;-\epsilon) - \hat{\theta}$ for a strictly convex and twice differentiable $\ell(\cdot,\cdot)$:
\begin{equation}\label{eq:infl}
    \mathcal{I}(\mathbf{x}_i;-\epsilon)=\lim_{\epsilon\to 0} \hat{\theta}(\mathbf{x}_i;-\epsilon)-\hat{\theta}=-\mathbf{H}_{\hat{\theta}}^{-1} \grad(\mathbf{x}_i,y_i),
\end{equation}
where $\mathbf{H}_\opt \defeq \frac{1}{N}\sum_{i=1}^{N} \nabla_\opt^{2}\ell(\mathbf{x}_i,y_i)+\lambda\mathbf{I}$ is the Hessian matrix with regularization at parameter $\hat{\theta}$. For some differentiable model evaluation function $f: \Theta\rightarrow \mathbf{R}$ like calculating total model loss over a test set, the change from down weighing $\epsilon\rightarrow(\mathbf{x}_i,y_i)$ to the evaluative results can be approximated by $\nabla_\opt f(\opt)\mathbf{H}_{\hat{\theta}}^{-1} \grad(\mathbf{x}_i,y_i)$. When $N$ the size of the training data is large, by setting $\epsilon=\frac{1}{N}$, we can approximate the change of $\hat{\theta}$ incurred by removing an entire training sample $\infl(\mathbf{x}_i;-\frac{1}{N})=\infl(-\mathbf{x}_i)$ via linear extrapolations $\frac{1}{N}\rightarrow 0$. Obviously, in terms of the estimated influence $\infl$, removing a training sample has the opposite value of adding the same training sample $\infl(-\mathbf{x}_i)=-\infl(+\mathbf{x}_i)$. In our work, we shall assume an additivity of influence functions in computations when several samples are removed, \eg, when removing two samples: $\infl(-\mathbf{x}_i,-\mathbf{x}_j)=\infl(-\mathbf{x}_i)+\infl(-\mathbf{x}_j)$.

Though efficient, as a drawback, influence functions on non-convex models suffer from estimation errors due to the variant local minima and usually a computational approximation to $\hessinvtheta$ for a non-invertible Hessian matrix. To introduce influence functions from i.i.d. data to graphs and precisely characterize the influence of graph elements to model parameters' changes, we consider a convex model called Simple Graph Convolution from the GCNs family.

\paragraph{Simple Graph Convolution}

By removing non-linear activations between layers from typical Graph Convolutional Networks, Simple Graph Convolution (SGC)~\citep{wu2019simplifying} formulates a linear simplification of GCNs with competitive performance on various tasks~\citep{he2020lightgcn,rakhimberdina2019linear}. Let $G=(V,E)$ denote an undirected attributed graph, where $V=\{v\}$ contains vertices with corresponding feature $\mathbf{X}\in\mathbf{R}^{|V|\times D}$ with $D$ the feature dimension, and $E=\{\eij\}_{1\le i<j\le |V|}$ is the set of edges. Let $\Gamma_{v}$ denote the set of neighborhood nodes around $v$, and $d_{v}$ the node degrees of $v$. We use $\mathbf{A}$ denote the adjacency matrix where $\mathbf{A}_{ij}=\mathbf{A}_{ji}=1$ if $e_{ij}\in E$, and 0 elsewhere. $\mathbf{D}=\text{diag}(d_v)$ denotes the degree matrix. When the context is clear, we simplify the notation $\Gamma_{v_i}\rightarrow\Gamma_i$, and the same manner for other symbols. For multi-layer GNNs, let $\mathbf{z}_{v}^{(k)}$ denote the hidden representation of node $v$ in the $k$-th layer, and with $\mathbf{z}_{v}^{(0)} = \mathbf{x}_v$ the initial node features. Simple Graph Convolution processes node representations as: $\mathbf{z}_{v}^{(k)} = \mathbf{W}^{(k)} \left(\sum_{u \in{\Gamma_{v}\cup \{v\}}} d_{u}^{-1/2} d_{v}^{-1/2}\mathbf{z}_{u}^{(k-1)}\right) + \mathbf{b}^{(k)}$, where $\mathbf{W}^{(k)}$ and $\mathbf{b}^{(k)}$ are trainable parameters in $k$-th layer. In transductive node classification, let $\vtrain\subset V$ denote the set of $N$ training nodes associated with labels $y$. ERM of SGC in this task is $\hat{\theta} = \argmin_{\theta\in\Theta} \frac{1}{N} \sum_{v\in\vtrain} \ell(\mathbf{z}_{v}^{(k)},y_v) + \normreg$. Due to the linearity of SGC, parameters $\mathbf{W}^{(k)}$ and $\mathbf{b}^{(k)}$ in each layer can be unified, and predictions after $k$ layers can be simplified as $\mathbf{y} = \argmax (\tilde{\mathbf{D}}^{-\frac{1}{2}}\normadj\tilde{\mathbf{D}}^{-\frac{1}{2}})^{k}\mathbf{X}\mathbf{W} + \mathbf{b}$ with $\normadj = \mathbf{A}+\mathbf{I}$ and $\tilde{\mathbf{D}}$ the degree matrix of $\tilde{\mathbf{A}}$. Therefore, for node representations $\mathbf{Z}^{(k)}=(\tilde{\mathbf{D}}^{-\frac{1}{2}}\tilde{\mathbf{A}}\tilde{\mathbf{D}}^{-\frac{1}{2}})^{k}\mathbf{X}$ with $\mathbf{y}$ and cross-entropy loss, $\ell(\cdot,\cdot)$ is convex. The parameters $\theta$ in $\ell$ consist of matrix $\mathbf{W}\in \mathbf{R}^{D\times |\text{Class}|}$ and vector $\mathbf{b}\in \mathbf{R}^{|\text{Class}|}$ with $|\text{Class}|$ the number of class, and can be solved via logistic regression.

\paragraph{Additional Notations}

In what follows, we shall build our influence analysis upon SGC. For notational simplification, we omit $(k)$ in $\mathbf{Z}^{(k)}$ and use $\mathbf{Z}$ to denote the last-layer node representations from SGC. We use $\infl^*(-\eij)=\hat{\theta}(-\eij)-\hat{\theta}$ to denote the actual model parameters' change where $\hat{\theta}(-\eij)$ is obtained through ERM when $\eij$ is removed from $E$. Likewise, $\infl^*(-v_i)$ denotes the change from $v_i$'s removal from graph $G$. $\infl(-\eij)$ and $\infl(-v_i)$ are the corresponding estimated influence for $\infl^*(-\eij)$ and $\infl^*(-v_i)$ based on influence functions, respectively.

\section{Modeling the Influence of Elements in Graphs}
\label{sec:method}

We mainly consider the use of influence functions of two fundamental operations over an attributed graph: removing an edge (in~\cref{sec:infledge}) and removing a complete node (in~\cref{sec:inflnode}).

\subsection{Influence of Edge Removal}
\label{sec:infledge}

With message passing through edges in graph convolution, removing an edge will incur representational changes in $\mathbf{Z}$. When $\eij$ is removed, the changes come from two aspects: \textbf{(1)} The message passing for node features via the removed edge will be blocked, and all the representations of $k$-hop neighboring nodes of the removed edge will be affected. \textbf{(2)} Due to the normalization operation over $\mathbf{A}$, the degree of all adjacent edges $e_{jk},\forall k\in\Gamma_i$ and $e_{ik},\forall k\in\Gamma_j$ will be changed, and these edges will have a larger value in $\tilde{\mathbf{D}}^{-\frac{1}{2}}\tilde{\mathbf{A}}\tilde{\mathbf{D}}^{-\frac{1}{2}}$. We have the following expression to describe the representational changes $\Delta(-\eij)$ of node representations $\mathbf{Z}$ in SGC incurred by removing $\eij$.
\begin{equation}
    \Delta(-\eij) = [(\tilde{\mathbf{D}}(-\eij)^{-\frac{1}{2}}\normadj(-\eij)\tilde{\mathbf{D}}(-\eij)^{-\frac{1}{2}})^{k} - (\tilde{\mathbf{D}}^{-\frac{1}{2}}\tilde{\mathbf{A}}\tilde{\mathbf{D}}^{-\frac{1}{2}})^{k}] \mathbf{X}.
\end{equation}
$\mathbf{A}(-\eij)$ is the modified adjacency matrix with $\mathbf{A}(-\eij)_{ij/ji}=0$ and $\mathbf{A}(-\eij)=\mathbf{A}$ elsewhere. $\normadj(-\eij)=\mathbf{A}(-\eij) + \mathbf{I}$ and $\tilde{\mathbf{D}}(-\eij)$ the degree matrix of $\normadj(-\eij)$.
We can access the change in every node by having $\Delta(-\eij)$. Let $\delta_k(-\eij)$ denotes the $k$-th row in $\Delta(-\eij)$. $\delta_k=0$ implies no change in $k$-th node from removing $\eij$, and $\delta_k\neq0$ indicates a change in $\mathbf{z}_k$.

We proceed to use influence functions to characterize the counterfactual effect of removing $\eij$. Our high-level idea is, from an influence functions perspective, representational changes in nodes $\mathbf{z}\rightarrow\mathbf{z}+\delta$ is equivalent to removing training instances with feature $\mathbf{z}$, and adding new training instances with feature $\mathbf{z}+\delta$ and with the same labels. The problem thus turns back to an instance reweighing problem developed by influence functions. In this case, we have the lemma below to prove the influence functions' linearity.

\begin{restatable}{lemma}{add}{}
\label{lemma:add}
Consider empirical risk minimization $\hat{\theta}=\argmin_{\theta\in\Theta} \sum_i \ell(\mathbf{x}_i,y_i)$ and $\hat{\theta}(\mathbf{x}_j\rightarrow \mathbf{x}_j + \delta)=\argmin_{\theta\in\Theta} \sum_{i\neq j}\ell(\mathbf{x}_i,y_i) + \ell(\mathbf{x}_j + \delta, y_j)$ with some twice-differentiable and strictly convex $\ell$, let $\infl^*(\mathbf{x}_j\rightarrow \mathbf{x}_j+\delta) = \hat{\theta}(\mathbf{x}_j\rightarrow \mathbf{x}_j + \delta) - \hat{\theta}$, the estimated influence satisfies linearity:
\begin{equation}\label{eq:add}
\begin{aligned}
\infl(\mathbf{x}_j\rightarrow \mathbf{x}_j+\delta) = \infl(-\mathbf{x}_j) + \infl(+(\mathbf{x}_j + \delta)).
\end{aligned}
\end{equation}
\end{restatable}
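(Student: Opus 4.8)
The plan is to realize all three quantities as first-order (linear-extrapolation) estimates of perturbed ERM solutions taken around the \emph{same} base point $\hat{\theta}$, and then exploit that the map sending an additive loss-perturbation to its first-order effect on the minimizer is linear. Concretely, introduce a scalar $\varepsilon$ and the perturbed objective $\hat{\theta}_\varepsilon = \argmin_{\theta} \sum_i \ell(\mathbf{x}_i,y_i) + \varepsilon\bigl[\ell(\mathbf{x}_j+\delta,y_j) - \ell(\mathbf{x}_j,y_j)\bigr]$, so that $\hat{\theta}_0 = \hat{\theta}$ and $\hat{\theta}_1 = \hat{\theta}(\mathbf{x}_j\to\mathbf{x}_j+\delta)$; the influence-function estimate of $\infl^*(\mathbf{x}_j\to\mathbf{x}_j+\delta)=\hat{\theta}_1-\hat{\theta}_0$ is then the linear term $\infl(\mathbf{x}_j\to\mathbf{x}_j+\delta) = \frac{d}{d\varepsilon}\hat{\theta}_\varepsilon\big|_{\varepsilon=0}$.

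First I would differentiate the stationarity condition $\sum_i \nabla_\theta\ell(\mathbf{x}_i,y_i)\big|_{\hat{\theta}_\varepsilon} + \varepsilon\bigl[\nabla_\theta\ell(\mathbf{x}_j+\delta,y_j)-\nabla_\theta\ell(\mathbf{x}_j,y_j)\bigr]\big|_{\hat{\theta}_\varepsilon}=0$ implicitly in $\varepsilon$ and evaluate at $\varepsilon=0$. Twice-differentiability and strict convexity of $\ell$ make $\mathbf{H}_{\hat{\theta}}\defeq\sum_i\nabla_\theta^2\ell(\mathbf{x}_i,y_i)\big|_{\hat{\theta}}$ positive definite, hence invertible, and solving yields $\infl(\mathbf{x}_j\to\mathbf{x}_j+\delta) = -\mathbf{H}_{\hat{\theta}}^{-1}\bigl[\nabla_\theta\ell(\mathbf{x}_j+\delta,y_j)-\nabla_\theta\ell(\mathbf{x}_j,y_j)\bigr]\big|_{\hat{\theta}}$. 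Running the identical argument on the single-instance perturbations $\sum_i\ell-\varepsilon\ell(\mathbf{x}_j,y_j)$ and $\sum_i\ell+\varepsilon\ell(\mathbf{x}_j+\delta,y_j)$ --- both linearized around the same $\hat{\theta}$ and with the same $\mathbf{H}_{\hat{\theta}}$ --- gives $\infl(-\mathbf{x}_j)=\mathbf{H}_{\hat{\theta}}^{-1}\nabla_\theta\ell(\mathbf{x}_j,y_j)\big|_{\hat{\theta}}$ and $\infl(+(\mathbf{x}_j+\delta))=-\mathbf{H}_{\hat{\theta}}^{-1}\nabla_\theta\ell(\mathbf{x}_j+\delta,y_j)\big|_{\hat{\theta}}$; summing these two reproduces the expression above for $\infl(\mathbf{x}_j\to\mathbf{x}_j+\delta)$, which is \cref{eq:add}.

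I expect the only genuine subtlety --- everything else being routine implicit differentiation --- to be bookkeeping the base point: one must take $\infl(+(\mathbf{x}_j+\delta))$ as the first-order effect of \emph{adding} $(\mathbf{x}_j+\delta,y_j)$ to the current $N$-sample configuration (linearized at $\hat{\theta}$ with $\mathbf{H}_{\hat{\theta}}$), not at the $(N-1)$-sample optimum $\hat{\theta}(-\mathbf{x}_j)$; it is precisely this shared linearization point that makes the additivity exact rather than merely approximate. I would also state explicitly that \cref{eq:add} is an identity between the \emph{estimates} $\infl$, not between the actual changes $\infl^*$ (the latter generally fail additivity because the omitted higher-order terms do not cancel), and note that dropping the $\frac{1}{N}$ factor and the $\ell_2$ regularizer present in the earlier displays is immaterial to the argument, since strict convexity of the summed loss already supplies invertibility of $\mathbf{H}_{\hat{\theta}}$.
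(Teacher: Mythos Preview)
Your proposal is correct and takes essentially the same approach as the paper: introduce an $\varepsilon$-parameterized perturbed ERM, linearize the first-order optimality condition at $\hat{\theta}$, and read off $\infl(\mathbf{x}_j\to\mathbf{x}_j+\delta)=-\mathbf{H}_{\hat{\theta}}^{-1}\bigl[\nabla_\theta\ell(\mathbf{x}_j+\delta,y_j)-\nabla_\theta\ell(\mathbf{x}_j,y_j)\bigr]$, which manifestly splits as $\infl(+(\mathbf{x}_j+\delta))+\infl(-\mathbf{x}_j)$. The only cosmetic difference is that the paper Taylor-expands the stationarity equation and then differentiates in $\varepsilon$, whereas you invoke implicit differentiation directly; your remarks on the shared linearization point and on $\infl$ versus $\infl^*$ are accurate and worth keeping.
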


By having~\cref{lemma:add}, we are ready to derive a proposition from characterizing edge removal.

\begin{restatable}{proposition}{infledge}{}
\label{prop:infledge}
Let $\mathbf{\delta}_k$($-\eij$) denote the $k$-th row of $\Delta(-\eij)$. The influence of removing an edge $e_{ij}\in E$ from graph $G$ can be estimated by:
\begin{equation}\label{eq:edge}
\begin{aligned}
\infl(-e_{ij}) &= \infl(\mathbf{z}\rightarrow \mathbf{z} + \delta (-\eij)) = \sum_k \infl(+(\mathbf{z}_k + \delta_k(-\eij))) + \mathcal{I}(-\mathbf{z}_k) \\
&= -\hessinvtheta\sum_{v_k\in\vtrain} (\grad(\mathbf{z}_k+\delta_k(-\eij), y_k)-\grad(\mathbf{z}_k, y_k)).
\end{aligned}
\end{equation}
\end{restatable}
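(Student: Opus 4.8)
The plan is to reduce the edge removal to a collection of feature modifications on the training nodes and then apply \cref{lemma:add} term by term, stitching the pieces together with the additivity of influence functions postulated in \cref{sec:pre}. The key structural fact is that an edge removal enters the SGC training problem \emph{only} through the representation matrix $\mathbf{Z}$: the empirical risk $\sum_{v\in\vtrain}\ell(\mathbf{z}_v,y_v)+\normreg$ changes precisely by having each row $\mathbf{z}_k$ replaced with $\mathbf{z}_k+\delta_k(-\eij)$, while labels and the regularizer are untouched. So, as far as the optimizer is concerned, deleting $\eij$ \emph{is} the simultaneous modification $\{\mathbf{z}_k\to\mathbf{z}_k+\delta_k(-\eij)\}_{v_k\in\vtrain}$. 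Nodes with $\delta_k(-\eij)=\mathbf{0}$ (in particular every node more than $k$ hops from $\eij$, and every non-training node) contribute nothing to the objective change, so the sum over $v_k\in\vtrain$ in \eqref{eq:edge} is harmless whether or not we restrict it to the genuinely affected nodes.

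First I would treat a single affected training node $v_k$: \cref{lemma:add} with $\mathbf{x}_j\mapsto\mathbf{z}_k$ and $\delta\mapsto\delta_k(-\eij)$ gives $\infl(\mathbf{z}_k\to\mathbf{z}_k+\delta_k(-\eij))=\infl(-\mathbf{z}_k)+\infl(+(\mathbf{z}_k+\delta_k(-\eij)))$. Next I would compose these single-node statements over all affected training nodes using the additivity assumption on influence, obtaining $\infl(-\eij)=\sum_{v_k\in\vtrain}\bigl(\infl(-\mathbf{z}_k)+\infl(+(\mathbf{z}_k+\delta_k(-\eij)))\bigr)$, which is the middle line of \eqref{eq:edge}. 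Finally I would plug in the closed form \eqref{eq:infl} for each single-instance influence: an added instance contributes $\infl(+(\mathbf{z}_k+\delta_k(-\eij)))=-\hessinvtheta\,\grad(\mathbf{z}_k+\delta_k(-\eij),y_k)$ and a removed one contributes $\infl(-\mathbf{z}_k)=+\hessinvtheta\,\grad(\mathbf{z}_k,y_k)$, using $\infl(-\mathbf{x})=-\infl(+\mathbf{x})$; pulling the common factor $\hessinvtheta$ out of the sum yields exactly $\infl(-\eij)=-\hessinvtheta\sum_{v_k\in\vtrain}\bigl(\grad(\mathbf{z}_k+\delta_k(-\eij),y_k)-\grad(\mathbf{z}_k,y_k)\bigr)$.

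I expect the only real obstacle to be justifying the middle step, namely passing from a single modified node (which is all \cref{lemma:add} covers) to the many nodes perturbed simultaneously by one edge deletion. This does not follow from an exact ERM identity: it rests on the declared additivity of estimated influences, i.e., on treating the composite perturbation as a sum of independent single-node remove/add operations and dropping the cross-interaction terms as well as the change in the Hessian away from $\opt$. For \cref{prop:infledge} as stated this is exactly the content we are permitted to assume, so the argument goes through; quantifying the discrepancy introduced here is what the subsequent error-bound theorem is for, and everything else above is routine substitution and bookkeeping.
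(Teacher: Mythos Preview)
Your proposal is correct and follows essentially the same route as the paper's own proof: invoke \cref{lemma:add} for each affected node, aggregate via the additivity assumption $\infl(-\mathbf{z}_i,-\mathbf{z}_j)=\infl(-\mathbf{z}_i)+\infl(-\mathbf{z}_j)$ from \cref{sec:pre}, and then substitute the closed form \eqref{eq:infl}. Your discussion is in fact more careful than the paper's one-line justification, especially in flagging that the passage from single-node to multi-node modification rests on the assumed additivity rather than an exact ERM identity.
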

\vspace{-5mm}
\begin{proof}
The second equality comes from~\cref{lemma:add}, and the third equality comes from~\cref{eq:infl}. Realize that removing two representations $\infl(-z_i,-z_j)=\infl(-z_i)+\infl(-z_j)$ completing the proof. 
\end{proof}
\vspace{-2mm}

\cref{prop:infledge} offer an approach to calculate the estimated influence of removing $\eij$. In practice, having the inverse hessian matrix, a removal only requires users to compute the updated gradients $\grad(\mathbf{z}_k+\delta_k(-\eij), y_k)$ and its original gradients for all affected nodes in ($k$+1)-hop neighbors.

%-------------------------------------------------------------------------

\subsection{Influence of Node Removal}
\label{sec:inflnode}

We address the case of node removal. The impact from removing a node $v_i$ from graph $G$ to parameters' change are two-folds: \textbf{(1)} The loss term $\ell(\mathbf{x}_i,y_i)$ will no longer involved in ERM if $v_i\in\vtrain$. \textbf{(2)} All edges link to this node $\{e_{ij}\},\forall j\in\Gamma_i$ will be removed either. The first aspect can be deemed as a regular training instance removal similar to an i.i.d. case, and the second aspect can be an incremental extension from edge removal in~\cref{prop:infledge}.

The representational changes from removing node $v_i$ can be expressed as:
\begin{equation}
    \Delta(-v_i) = [(\tilde{\mathbf{D}}(-v_i)^{-\frac{1}{2}}\tilde{\mathbf{A}}(-v_i)\tilde{\mathbf{D}}(-v_i)^{-\frac{1}{2}})^{k}-(\tilde{\mathbf{D}}^{-\frac{1}{2}}\tilde{\mathbf{A}}\tilde{\mathbf{D}}^{-\frac{1}{2}})^{k}]\mathbf{X},
\end{equation}
with $\mathbf{A}(-v_i)_{jk/kj}=\mathbf{A}_{jk/kj}, \forall j,k: j\neq i\wedge k\notin\Gamma_i$, and $\mathbf{A}(-v_i)=0$ elsewhere. Similarly, $\tilde{\mathbf{A}}(-v_i) = \mathbf{A}(-v_i) + \mathbf{I}$ and $\tilde{\mathbf{D}}(-v_i)$ is the corresponding degree matrix of $\tilde{\mathbf{A}}(-v_i)$. Having $\Delta(-v_i)$,~\cref{lemma:add} and~\cref{prop:infledge}, we state the estimated influence of removing $v_i$.

\begin{restatable}{proposition}{inflnode}{}\label{prop:inflnode}
Let $\delta_j(-v_i)$ denote the $j$-th row of $\Delta(-v_i)$. The influence of removing node $v_i$ from graph $G$ can be estimated by:
\begin{equation}\label{eq:node}
\begin{aligned}
\infl(-v_i) &= \infl(-\mathbf{z}_i) + \infl(\mathbf{z}\rightarrow\mathbf{z}+\delta(-v_i)) = \infl(-\mathbf{z}_i) + \sum_j \infl(+(\mathbf{z}_j + \delta_j(-v_i))) + \infl(-\mathbf{z}_j) \\ 
&= -\mathds{1}_{v_i\in\vtrain} \cdot \hessinvtheta \grad\left(\mathbf{z}_{i}, y_{i}\right)  - \hessinvtheta \sum_{v_j\in\vtrain} (\grad(\mathbf{z}_j + \delta_j(-v_i), y_j)-\grad(\mathbf{z}_j, y_j)),
\end{aligned}
\end{equation}
where $\mathds{1}$ is an indicator function.
\end{restatable}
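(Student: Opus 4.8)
The plan is to follow the same two-step decomposition used for \cref{prop:infledge}, but now also accounting for the extra effect that removing $v_i$ deletes its own loss term from the ERM objective. Concretely, removing node $v_i$ from $G$ amounts to (i) dropping the summand $\ell(\mathbf{z}_i,y_i)$ from the training objective whenever $v_i\in\vtrain$, and (ii) applying the representational shift $\mathbf{z}_j\rightarrow\mathbf{z}_j+\delta_j(-v_i)$ to the remaining nodes $v_j$, where $\Delta(-v_i)$ already bundles together the deletion of all edges incident to $v_i$ and the induced re-normalization of $\tilde{\mathbf{D}}^{-1/2}\tilde{\mathbf{A}}\tilde{\mathbf{D}}^{-1/2}$. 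Because influence functions are assumed additive under several simultaneous perturbations (as stated in \cref{sec:pre}), the total estimated influence is the sum of the contributions of (i) and (ii), which is exactly the first equality in \cref{eq:node}.

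For step (i), this is an i.i.d.-style single-instance removal, so \cref{eq:infl} gives $\infl(-\mathbf{z}_i)=-\hessinvtheta\grad(\mathbf{z}_i,y_i)$; when $v_i\notin\vtrain$ there is no such loss term and the contribution is zero, which is precisely what the indicator $\mathds{1}_{v_i\in\vtrain}$ records. For step (ii), I would apply \cref{lemma:add} node-by-node: each feature shift $\mathbf{z}_j\rightarrow\mathbf{z}_j+\delta_j(-v_i)$ is re-expressed as the exact ERM reformulation ``remove the instance $(\mathbf{z}_j,y_j)$, add the instance $(\mathbf{z}_j+\delta_j(-v_i),y_j)$'', whose estimated influence is $\infl(-\mathbf{z}_j)+\infl(+(\mathbf{z}_j+\delta_j(-v_i)))$. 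Summing over the affected nodes $j\neq i$ and invoking additivity once more yields the second equality in \cref{eq:node}.

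It then remains to expand each $\infl(\pm\,\cdot)$ via \cref{eq:infl} and collect terms. Only training nodes enter the ERM objective, so the sum over $j$ may be restricted to $v_j\in\vtrain$ (representational perturbations at non-training nodes contribute no loss term and hence nothing to the parameter estimate), and $v_i$ itself is excluded since its loss term was already handled in step (i). For each remaining $j$, the pair collapses into $\infl(-\mathbf{z}_j)+\infl(+(\mathbf{z}_j+\delta_j(-v_i)))=-\hessinvtheta\bigl(\grad(\mathbf{z}_j+\delta_j(-v_i),y_j)-\grad(\mathbf{z}_j,y_j)\bigr)$, the same telescoping that appeared in \cref{prop:infledge}. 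Adding the step-(i) term produces the closed form in the last line of \cref{eq:node}.

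The main obstacle I anticipate is the bookkeeping rather than any hard estimate: one must avoid double-counting $v_i$ (its loss is removed exactly once in step (i) and $v_i$ is dropped from the representational-change sum), discard representational perturbations at non-training nodes, and verify that the ``remove-then-add'' rewriting of a feature change is an \emph{exact} reformulation of the perturbed ERM so that no approximation beyond the influence-function linearization is introduced. A secondary point worth making explicit is that the derivation leans on the additivity of influence functions across the many simultaneous add/remove operations triggered by a single node deletion — an assumption adopted in \cref{sec:pre}, but one that quietly governs the accuracy of the resulting estimate and is the natural target of the accompanying error-bound analysis.
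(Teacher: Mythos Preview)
Your proposal is correct and follows essentially the same route as the paper: decompose node removal into (i) dropping the loss term of $v_i$ (handled by \cref{eq:infl} with the indicator $\mathds{1}_{v_i\in\vtrain}$) and (ii) the representational shift on the remaining training nodes (handled by \cref{lemma:add} and additivity), then expand and collect. Your write-up is in fact more careful about the bookkeeping (excluding $v_i$ from the representational sum, discarding non-training nodes) than the paper's own proof, which simply says ``combining \cref{lemma:add} and \cref{eq:infl} completes the proof.''
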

\vspace{-5mm}
\begin{proof}
Combining~\cref{lemma:add} and~\cref{eq:infl} completes the proof.
\end{proof}
\vspace{-2mm}

\section{Theoretical Error Bounds}
\label{sec:thm}

In the above section, we show how to estimate the changes of model parameters due to edge removal: $\hat{\theta}\rightarrow\hat{\theta}(-\eij)$ and node removals: $\hat{\theta}\rightarrow\hat{\theta}(-v_i)$. In this section, we study the error between the estimated influence given by influence functions $\infl$ and the actual influence $\infl^*$ obtained by model retraining. We give upper error bounds on edge removal $\|\infl^{*}(-\eij)-\infl(-\eij)\|_2$ (see~\cref{them:ActNt}) and node removal $\|\infl^{*}(-v_i)-\infl(-v_i)\|_2$ (see~\cref{cor:ActNt}).

In what follows, we shall assume the second derivative of $\ell(\cdot,\cdot)$ is Lipschitz continuous at $\theta$ with constant $C$ based on the convergence theory of Newton's method. To simplify the notations, we use $\mathbf{z}_i'=\mathbf{z}_i + \delta_i$ to denote the new representation of $v_i$ obtained after removing an edge or a node, where $\delta_i$ is the row vector of $\Delta(-\eij)$ or $\Delta(-v_i)$ depending on the context.

\begin{restatable}{theorem}{ActNt}
\label{them:ActNt}
Let $\eigen\geq 0$ denote the smallest eigenvalue of all eigenvalues of Hessian matrices $\nabla_{\hat{\theta}}^2\ell(\mathbf{z}_i,y_i),\forall v_i\in\vtrain$ of the original model $\hat{\theta}$. Let $\eigen'\geq 0$ denote the smallest eigenvalue of all eigenvalues of Hessian matrices $\nabla_{{\hat{\theta}}(-\eij)}^2\ell(\mathbf{z}_i,y_i),\forall v_i\in\vtrain$ of the retrained model $\hat{\theta}(-\eij)$ with $\eij$ removed from graph $G$. Use $L$ denote the set $\{v: \mathbf{z}'\neq\mathbf{z}\}$ containing affected nodes from the edge removal, and $\text{Err}(-\eij) = \|\infl^{*}(-\eij)-\infl(-\eij)\|_2$. Recall $\lambda$ is the $\ell_2$ regularization strength; we have an upper bound on the estimated error of model parameters' change:
\begin{equation}
\begin{aligned}
\text{Err}(-e_{ij}) \leq & \frac{N^{3}C}{(N\lambda + (N-|L|)\eigen + \eigen'|L|)^{3}} \cdot \|\sum_{v_l\in L} (\grad(\mathbf{z}'_l,y_l) - \grad(\mathbf{z}_l,y_l))\|^{2}_{2} \\
&+ \frac{N}{N\lambda +(N-|L|)\eigen + \min(\eigen,\eigen')|L|} \cdot \|\sum_{v_l\in L} (\grad (\mathbf{z}'_l,y_l) - \grad (\mathbf{z}_l,y_l))\|_{2}.
\end{aligned}
\end{equation}
\end{restatable}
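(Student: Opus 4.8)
The plan is to realize the edge removal as a feature-perturbation of the training set — the representation of each affected node $v_l\in L$ changes from $\mathbf{z}_l$ to $\mathbf{z}'_l=\mathbf{z}_l+\delta_l$ — so that $\opt(-\eij)$ is the exact minimizer of the \emph{perturbed} regularized empirical risk $R'$, obtained from the original risk $R$ by replacing $\ell(\mathbf{z}_l,y_l;\cdot)$ with $\ell(\mathbf{z}'_l,y_l;\cdot)$ for every $v_l\in L$, while $\infl(-\eij)=-\hessinvtheta\,\mathbf{g}$ with $\mathbf{g}\defeq\sum_{v_l\in L}\big(\grad(\mathbf{z}'_l,y_l)-\grad(\mathbf{z}_l,y_l)\big)$ is, up to the $1/N$-scaling of the empirical risk, a single Newton-type step from $\opt$ on $R'$ in which the Hessian of $R'$ has been replaced by $\mathbf{H}_{\opt}$. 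The strategy is to interpose the \emph{exact} one-Newton-step iterate $\opt+\Nttheta$, with $\opt+\Nttheta=\opt-(\nabla^2 R'(\opt))^{-1}\nabla R'(\opt)$, and bound $\text{Err}(-\eij)=\|\infl^*(-\eij)-\infl(-\eij)\|_2\le\|\infl^*(-\eij)-\Nttheta\|_2+\|\Nttheta-\infl(-\eij)\|_2$ by the triangle inequality.

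For the first summand I would invoke the quadratic-convergence estimate of Newton's method. Because $\opt$ and $\opt(-\eij)$ are interior stationary points ($\nabla R(\opt)=0$, $\nabla R'(\opt(-\eij))=0$) while $\nabla R'(\opt)=\nabla R(\opt)+\tfrac1N\mathbf{g}=\tfrac1N\mathbf{g}$ (this is exactly the perturbation that defines $\mathbf{g}$), the standard bound — which uses precisely the assumed $C$-Lipschitz continuity of $\nabla^2\ell$, hence of $\nabla^2 R'$, together with strong convexity on the segment $[\opt,\opt(-\eij)]$ — gives $\|\opt(-\eij)-\opt-\Nttheta\|_2\le\frac{C}{2\mu}\|\Nttheta\|_2^2$, where $\mu$ lower-bounds the smallest eigenvalue of $\nabla^2 R'$ along that segment and $\|\Nttheta\|_2\lesssim\|\mathbf{g}\|_2/(N\mu)$. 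The claimed denominator $N\lambda+(N-|L|)\eigen+\eigen'|L|$ is then produced by lower-bounding (a multiple of) $\mu$: the $\ell_2$-regularizer contributes the $N\lambda$ piece, the $N-|L|$ losses at unaffected training nodes each contribute at least $\eigen$ (their Hessians are $\succeq\eigen\mathbf{I}$ at $\opt$), and the $|L|$ losses at affected nodes each contribute at least $\eigen'$ near the retrained point $\opt(-\eij)$. Feeding these back through $\frac{C}{2\mu}\|\Nttheta\|_2^2$ and collecting the powers of $N$ coming from the $\tfrac1N$-averaged definition of $\mathbf{H}_{\opt}$ yields the first term of the bound.

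The second summand $\|\Nttheta-\infl(-\eij)\|_2$ quantifies the error incurred by using $\mathbf{H}_{\opt}$, the Hessian of the \emph{original} risk, rather than $\nabla^2 R'(\opt)$, when both are inverted against the same vector $\mathbf{g}$. Writing $\Nttheta$ and $\infl(-\eij)$ as regularized strongly convex solves with right-hand side $\mathbf{g}$ and bounding each inverse Hessian's operator norm by the reciprocal of its smallest eigenvalue gives a first-order bound of the form $N\|\mathbf{g}\|_2/\big(N\lambda+(N-|L|)\eigen+\min(\eigen,\eigen')|L|\big)$: the regularizer and the unaffected losses contribute as before, but since the curvature of an affected loss is now being compared \emph{across} the two Hessians, one can only certify at least $\min(\eigen,\eigen')$ from each of those $|L|$ terms. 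Adding the two estimates gives the stated inequality.

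The step I expect to be the main obstacle is the eigenvalue bookkeeping in the denominators: tracking exactly which Hessian is evaluated at which parameter (the original $\opt$, the retrained $\opt(-\eij)$, or an interpolation thereof) and with which features ($\mathbf{z}_l$ versus $\mathbf{z}'_l$), so that the lower bounds land precisely on $N\lambda+(N-|L|)\eigen+\eigen'|L|$ and $N\lambda+(N-|L|)\eigen+\min(\eigen,\eigen')|L|$ rather than on a coarser $N(\lambda+\eigen)$, and keeping the powers of $N$ consistent with the normalization of $\mathbf{H}_{\opt}$. A secondary concern is making the Newton-step argument fully rigorous — existence and uniqueness of $\opt(-\eij)$, the interpolating segment staying inside the region where the eigenvalue bounds hold, and the clean splitting of the total error into its quadratic and first-order pieces — which is exactly where the strong convexity from the regularizer and the Lipschitz-Hessian hypothesis do the heavy lifting.
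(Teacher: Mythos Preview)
Your proposal is correct and follows essentially the same route as the paper: interpose the exact one-step Newton iterate $\opt+\Nttheta=\opt-(\nabla^2 R'(\opt))^{-1}\nabla R'(\opt)$, bound $\|\infl^*(-\eij)-\Nttheta\|_2$ via the quadratic Newton-convergence estimate (strong convexity plus the $C$-Lipschitz Hessian), and bound $\|\Nttheta-\infl(-\eij)\|_2$ by comparing the two inverse Hessians applied to $\mathbf{g}$. Your identification of the eigenvalue bookkeeping as the delicate part is apt --- the paper's own argument is somewhat loose there as well (the definitions of $\eigen'$ in the theorem statement versus the proof do not quite match), so your plan is at least as careful as the original.
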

\vspace{-4mm}
\begin{proof}[Proof sketch]
We use the one-step Newton approximation \citep{pregibon1981logistic} as an intermediate step to derive the bound. The first term is the difference between the actual change $\infl^*(-\eij)$ and its Newton approximation, and the second term is the difference between the Newton approximation and the estimated influence $\infl(-\eij)$. Combining these two parts result in the bound.
\end{proof}
\vspace{-2mm}

\begin{remark}
\label{remark}
We have the following main observations from~\cref{them:ActNt}. \textbf{(1)} The estimation error of influence function is controlled by the $\ell_2$ regularization strength within a factor of $\mathcal{O}(1/\lambda)$. A stronger regularization will likely produce a better approximation. \textbf{(2)} The error is controlled by the inherent property of a model. A smoother model in terms of its hessian matrix will help lower the upper bound. \textbf{(3)} The upper bound is controlled by the norm of the changed gradient from $\mathbf{z}\rightarrow\mathbf{z}'$. Intuitively, if removing $\eij$ incurs smaller changes in node representations, the approximation of the actual influence would be more accurate. Also, a smaller $\text{Err}(-v_i)$ is expected if the model is less prone to changes in training samples. \textbf{(4)} There are no significant correlations between the bound and the number of training nodes $N$. As a special case, if $\eigen=\eigen'=0$, the bound is irrelevant to $N$. We attach empirical verification for our bound in~\cref{verify:proof}.
\end{remark}

Similar to~\cref{them:ActNt}, we have~\cref{cor:ActNt} to derive an upper bound on $\|\infl^{*}(-v_i)-\infl(-v_i)\|_2$ for removing a node $v_i$ from graph.

% \subsection{Empirical Verification of ~\cref{them:ActNt}}
% \label{sec:verify}

% As the value of $l_2$ regularization term decreases, the accuracy of our estimation of the influence of edges drops, and the Spearman correlation coefficient decrease correspondingly. This trend is consistent with the interpretations of the error bound on ~\cref{them:ActNt} that the estimation error of an influence function is inversely related with the $l_2$ regularization term. We also notice that the edges that connects high-degree nodes have overall less influence. Their estimation points lies relatively close to the $y$$=$$x$ line and thus could have relative small estimation error.
% \begin{figure}
% \begin{center}
% \label{verify:proof}
% \includegraphics[width =0.9\textwidth]{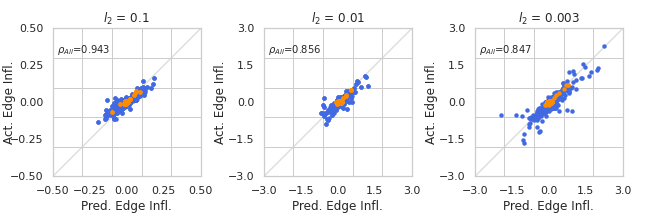}\vspace{-4mm}
% \caption{Spearman correlation on \textit{Citeseer} dataset with different $l_2$ regularization term on validating influence of edges. The orange points denote the summations of the degrees of the two nodes that an edge connects is high. The blue points denote the edges, which are the summations of the degrees of the two nodes connecting the edge.
% }\label{fig:val3}
% \end{center}
% \end{figure}

\section{Experiments}
\label{sec:exp}

We conducted three major experiments: \textbf{(1)} Validate the estimation accuracy of our influence functions on graph in~\cref{sec:valid}; \textbf{(2)} Utilize the estimated edge influence to increase model performance and carry out adversarial attacks in~\cref{sec:edge}; and \textbf{(3)} Utilize the estimated node influence to carry out adversarial attacks on GCN~\citep{kipf2016semi} in~\cref{sec:robust}. 

%-------------------------------------------------------------------------

\subsection{Setup}
\label{sec:setup}

We choose six real-world graph datasets:\textit{Cora}, \textit{PubMed}, \textit{CiteSeer}~\citep{Sen_Namata_Bilgic_Getoor_Galligher_Eliassi-Rad_2008}, \textit{WiKi-CS}~\citep{mernyei2020wiki}, \textit{Amazon Computers}, and \textit{Amazon Photos}~\citep{shchur2018pitfalls} in our experiments. Statistics of these datasets are outlined in~\cref{sec:data}~\cref{tab:dataset}. For the \textit{Cora}, \textit{PubMed}, and \textit{CiteSeer} datasets, we used their public train/val/test splits. For the Wiki-CS datasets, we took a random single train/val/test split provided by~\citet{mernyei2020wiki}. For the Amazon datasets, we randomly selected 20 nodes from each class for training, 30 nodes from each class for validation and used the rest nodes in the test set. All the experiments are conducted under the transductive node classification settings. We only use the last three datasets for influence validation. 

\label{sec:data}
\begin{table}[h]
\caption{Dataset Statistics}
\label{tab:dataset}
\begin{center}
\begin{tabular}{lrrrrr}
    \toprule
    Dataset & \# Node & \# Edge & \# Class & \# Feature & \# Train/Val/Test\\
    \midrule
    \textit{Cora} & 2,708 & 5,429 & 7 & 1,433 & 140 / 500 / 1,000\\
    \textit{Citeseer} & 3,327 & 4,732 & 6 & 3,703 & 120 / 500 / 1,000\\
    \textit{Pubmed} & 19,717 & 44,338 & 3 & 500 & 60 / 500 / 1,000\\
    \textit{WikiCS} & 11,701 & 216,123 & 10 & 300 & 250/ 1769 /5847 \\ 
    \textit{Amazon Computer} & 13,752 & 245,861 & 10 & 767 & 200 / 300 / Rest\\
    \textit{Amazon Photo} & 7,650 & 119,081 & 8 & 745 & 160 / 240 / Rest\\
    \bottomrule
\end{tabular}
\end{center}
\end{table}

% % comment for faster compiling
\begin{figure}[t]
\centering
    \includegraphics[width=1.\columnwidth]{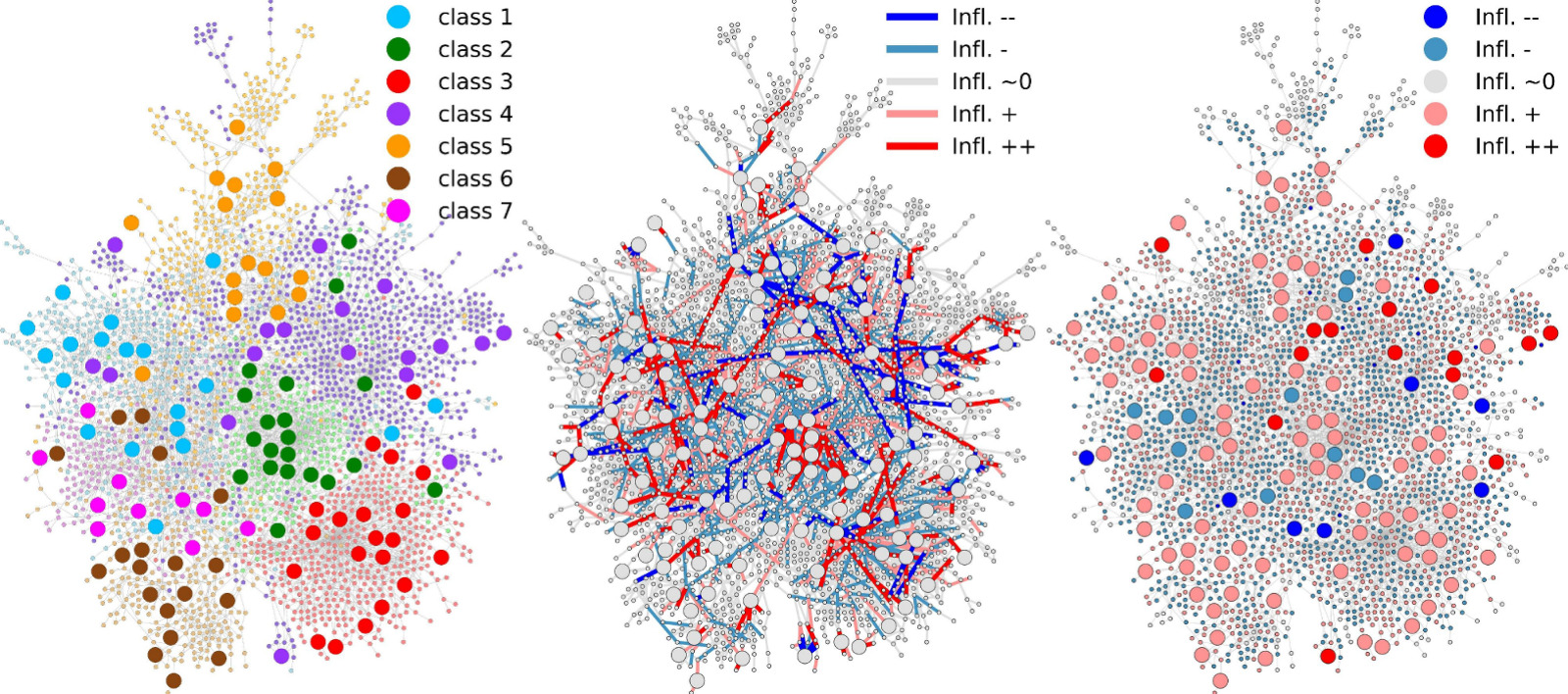}
    \vspace{-2mm}
    \caption{The \textit{Cora} experiment -- the estimated influences of individual training nodes/edges on the validation loss. %The training and validation subsets are from the public split. 
    The largest connected component of the \textit{Cora} dataset is visualized here. \textbf{Left}: The dataset. The node size indicates if a node is in the training subset (large) or not (small). \textbf{Middle}: Influence of the training edges. Each edge is colored accordingly to its estimated influence value (\textcolor{blue}{\textit{blue}} - negative influence, removing it is expected to decrease the loss on the validation set; \textcolor{red}{\textit{red}} -- positive influence, removing it is expected to increase the loss on the validation set; and grey -- little influence. The deeper color indicates higher influence.). \textbf{Right}: Influence of the training nodes. The same color scheme in the middle plot is used here.} 
    \label{fig:cora}
    \vspace{-5mm}
\end{figure}

%-------------------------------------------------------------------------

\subsection{Validating Influence Functions on Graphs}
\label{sec:valid}

\paragraph{Validating Estimated Influence}
We compared the estimated influence of removing a node/edge with its corresponding ground truth effect. The actual influence is obtained by re-training the model after removing a node/edge and calculating the change in the total cross-entropy loss. % $f(\theta)=\sum_{v\in V_\text{val}}\ell(x,y)$.
%, which computes the overall cross entropy loss on nodes in the validation set. 
We also validated the estimated influence of removing node embeddings, for example, removing $\ell(\mathbf{z}_i,y_i)$ of node $v_i$ from the ERM objective while keeping the embeddings of other nodes intact. 
% For each dataset, we removed one node at a time until all nodes were removed, and calculated the cumulative influence.  For each dataset, we Iteratively remove every training node representation $\ell(\mathbf{z}_i,y_i)$ from the overall ERM objective, compute the estimated influence via~\cref{eq:infl}, and retrain the model to obtain the actual effect. \textbf{(2)}. Iteratively remove every edge from the graph, compute the estimated influence via~\cref{prop:infledge}, and retrain the model to obtain the actual effect. \textbf{(3)}. Iteratively remove every training node from the graph, compute the estimated influence via~\cref{prop:inflnode}, and retrain the model to obtain the actual effect. To emphasize the estimated influence of influential edges, we drop edges with their estimated influence smaller than 1e-5. 
% Spearman's correlation coefficient is used to measure how well the estimated influence approximates the actual influence. 
\cref{fig:val} shows that the estimated influence correlates highly with the actual influence (Spearman correlation coefficients range from 0.847 to 0.981). More results are included ~\cref{fig:val2}.

\begin{figure}[h!]
    \centering
    \includegraphics[width=0.9\columnwidth]{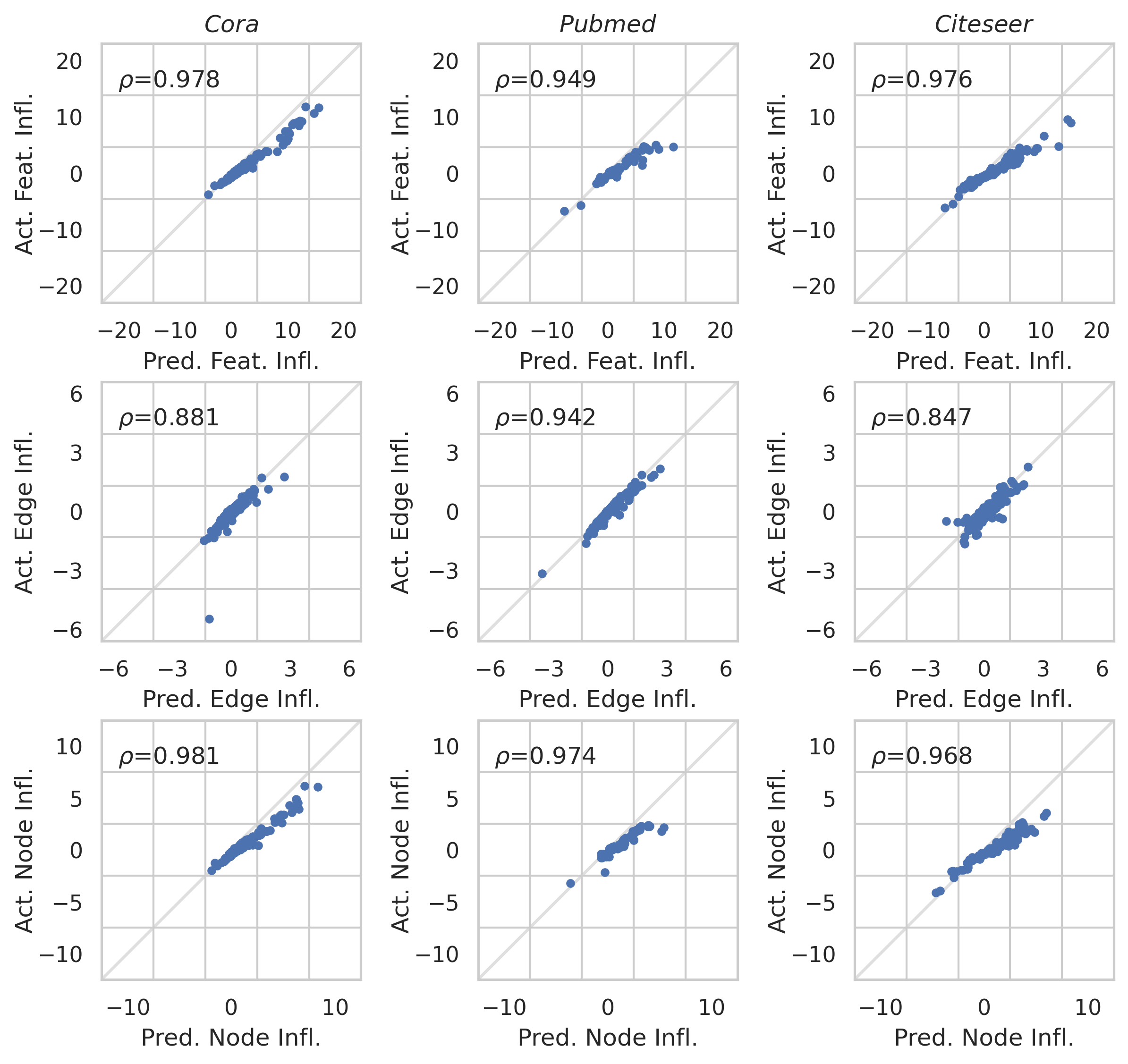}
    \vspace{-4mm}
    \caption{Estimated influence vs. actual influence. Three datasets are used in this illustration \textit{Cora} (left column), \textit{Pubmed} (middle column) and \textit{Citeseer} (right column). In all plots, the horizontal axes indicate the predicted influence on the validation set, the vertical axes indicate the actual influence, and $\rho$ indicates Spearman's correlation coefficient between our predictions and the actual influences. \textbf{Top row}: Influence of node embedding removal. Each point represents a training node embedding \textbf{Middle row}: Influence of edge removals. Each point corresponds to a removed edge. \textbf{Bottom row}: Influence of node removal. Each point represents a removed training node.}
    %Notice for undirected graph, for two connected nodes, an edge is created in both directions, our influence if calculated on the removal of both edges between two nodes. Also, the plots ignore all self-loop edges.
    \label{fig:val}
    \vspace{-3mm}
\end{figure}

\paragraph{Validating Estimated Influence: medium-sized datasets}
\begin{figure}[h!]
\begin{center}
\includegraphics[width =0.9\textwidth]{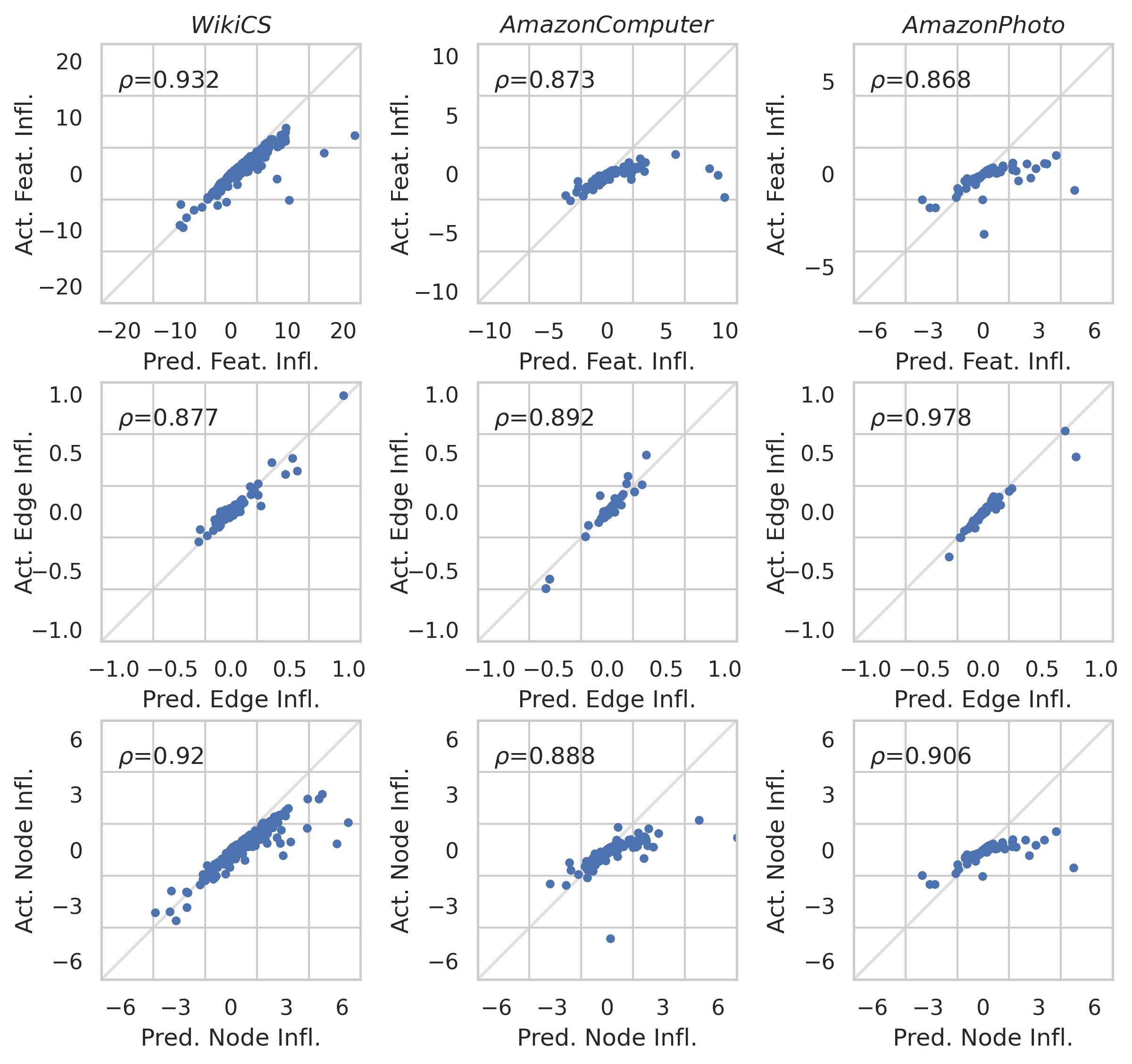}\vspace{-4mm}
\caption{Estimated influence vs. actual influence on medium-sized graphs. Three datasets are used in this illustration \textit{Wiki-CS} (left column), \textit{Amazon Computers} (middle column) and \textit{Amazon Photo} (right column). In all plots, the horizontal axes indicate the predicted influence on the test set, the vertical axes indicate the actual influence, and $\rho$ indicates Spearman's correlation coefficient between our predictions and the actual influences. \textbf{Top row}: Influence of node embeddings. \textbf{Middle row}: Influence of edge removals. Each point corresponds to a removed training edge. \textbf{Bottom row}: Influence of node removal. Each point represents a removed training node.
}\label{fig:val2}
\vspace{-8mm}
\end{center}
\end{figure}

For the \textit{Wiki-CS} dataset, we randomly select one of the train/val/test split as described in \cite{mernyei2020wiki} to explore the effect of training nodes/edges influence. For the Amazon Computers and Amazon Photo dataset, we follow the implementation of \cite{shchur2018pitfalls}. To set random splits, On each dataset, we use $20 * C$ nodes as the training set, $30 * C$ nodes as validating set, and the rest nodes as the testing set, where $C$ is the number of classes. Because for validating every edge's influence, we need to retrain the model and compare the change on loss, the computation cost is exceptionally high. We randomly choose 10000 edges of each dataset and validate their influence. We observe that our estimated influence is highly correlated to the actual influence even for medium-size datasets.

%-------------------------------------------------------------------------

\begin{figure}[t]
\centering
    \includegraphics[width=1.\columnwidth]{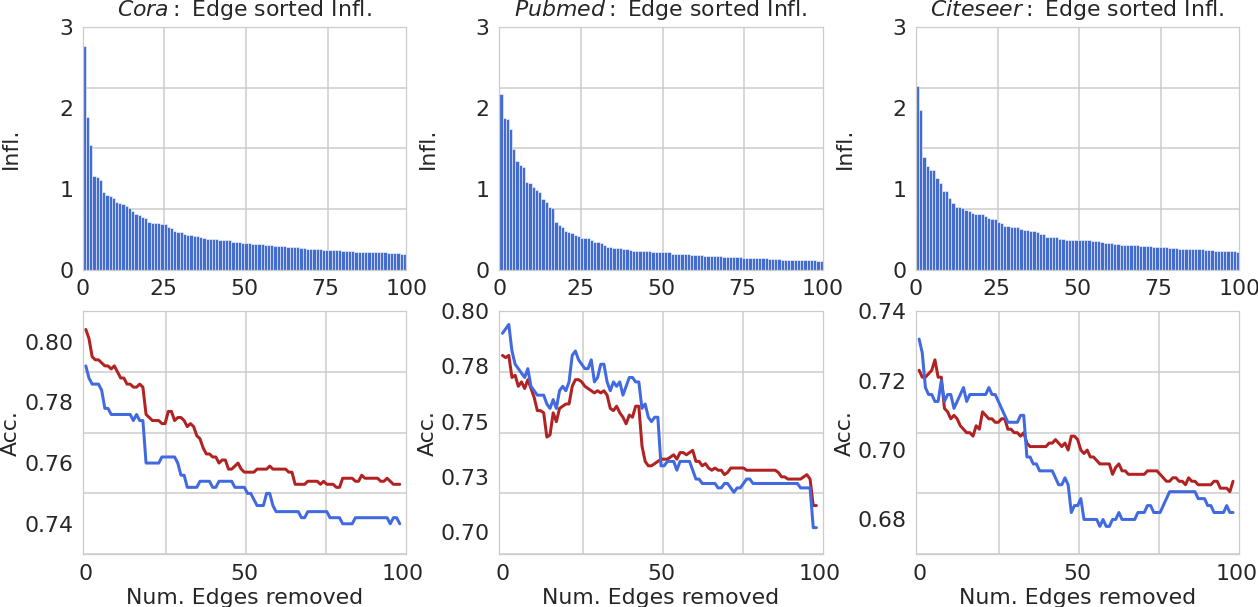}
    \vspace{-3mm}
    \caption{Study of edges with a positive influence on both validation and test set. Columns correspond to \textit{Cora}, \textit{Pubmed} and \textit{Citeseer} datasets. \textbf{Top}: the scale of values of the edges with negative influence. \textbf{Bottom}: accuracy drop by cumulatively removing edges with positive influence.}\label{fig:positive}
    % \vspace{-4mm}
\end{figure}

\paragraph{Visualization}
\cref{fig:cora} visualizes the estimated influence of edge and node removals on the validation loss for the \textit{Cora} dataset.  This visualization hints at opportunities for improving the test performance of a model or attacking a model by removing nodes/edges with noticeable influences (see experiments in Sections~\ref{sec:edge} and~\ref{sec:robust}).

%-------------------------------------------------------------------------

\subsection{Empirical Verification of ~\cref{them:ActNt}}
\label{sec:verify}
\begin{figure}[H]
\begin{center}
\label{verify:proof}
\includegraphics[width =0.9\textwidth]{}\vspace{-4mm}
\caption{Spearman correlation on \textit{Citeseer} dataset with different $l_2$ regularization term on validating influence of edges. The orange points denote the summations of the degrees of the two nodes that an edge connects high. The blue points denote the edges, which are the summations of the degrees of the two nodes connecting the edge.
}\label{fig:val3}
\end{center}
\end{figure}
\vspace{-5mm}
We choose \textit{CiteSeer} to empirically verify the error bounds for the edge influence. We estimated the edge influence and compared it to the actual influence in different values of $l_2$ terms. In addition, we calculate the degree of each edge in the attributed graph and compare the estimation error between the top $10\%$ largest degrees of edges(colored as orange) and the rest edges(colored as blue). We have the following observations. As the value of $l_2$ regularization term decreases, the accuracy of our estimation of the influence of edges drops, and the Spearman correlation coefficient decreases correspondingly. This trend is consistent with the interpretations of the error bound on ~\cref{them:ActNt} that the estimation error of an influence function is inversely related to the $l_2$ regularization term. This verifies our observations that the error bound is controlled by the $l_2$ term. We also notice that the edges that connect high-degree nodes have less influence overall. Their estimation points lie relatively close to the $y$$=$$x$ line and thus could have a relatively small estimation error. This could be partially explained by our interpretations on ~\cref{remark}. On removing edges with high degrees, the corresponding change in node embeddings should be relatively small. Thus, its gradients change should also be small. Therefore, it enjoys a relatively small estimation error. 

%-------------------------------------------------------------------------

\subsection{Applications of the Estimated Edge Influence}
\label{sec:edge}

The estimated influence of edge removals on the validation set can be utilized to improve the test performance of SGC or carry out adversarial attacks on SGC/GCN.

\paragraph{Improving performance via Edge Removals}

We begin by investigating the impact of edges with negative influences. Based on our influence analysis, removing negative influence edges from the original will decrease validation loss. Thus the classification accuracy on the test set is expected to increase correspondingly. We sort the edges by their estimated influences in descending order, then cumulatively remove edges starting from the one with the lowest negative influence. We train the SGC model, fine-tune it on the public split validation set and select the number of negative influence edges to be removed by validation accuracy. For a fair comparison, we fix the test set remaining unchanged regarding the removal of the edges. The results are displayed in Table~\ref{tab:NegativeEdge}, where we also report the performance of several classical and state-of-the-art GNN models on the whole original set as references, including GCN~\citep{kipf2016semi}, GAT~\citep{velickovic2018graph}, FGCN~\citep{chen2018fastgcn}, GIN~\citep{xu2018powerful}, DGI~\citep{velickovic2019deep} with a nonlinear activation function and SGC~\citep{wu2019simplifying}.

\begin{wraptable}{r}{7.5cm}
\vspace{-10mm}
\caption{Our performance via eliminating edges with negative influence values.}\label{tab:NegativeEdge}
\resizebox{0.45\columnwidth}{!}{
	\begin{tabular}{lcccc}
	\toprule
	    Methods & \textit{Cora} & \textit{Pubmed} & \textit{Citeseer} \\
	    \midrule
        GCN  & 81.4 {\scriptsize$\pm$ 0.4} & 79.0 {\scriptsize$\pm$ 0.4} & 70.1 {\scriptsize$\pm$ 0.5} \\ 
        GAT  & \textbf{83.3} {\scriptsize$\pm$ 0.7} & 78.5 {\scriptsize$\pm$ 0.3} & 72.6 {\scriptsize$\pm$ 0.6} \\
        FGCN & 79.8 {\scriptsize$\pm$ 0.3} & 77.4 {\scriptsize$\pm$ 0.3} & 68.8 {\scriptsize$\pm$ 0.6} \\
        GIN  & 77.6 {\scriptsize$\pm$ 1.1} & 77.0 {\scriptsize$\pm$ 1.2}& 66.1 {\scriptsize$\pm$ 0.9} \\
        DGI  & 82.5 {\scriptsize$\pm$ 0.7} & 78.4 {\scriptsize$\pm$ 0.7}& 71.6 {\scriptsize$\pm$ 0.7} \\
        SGC  & 81.0 {\scriptsize$\pm$ 0.0} & 78.9 {\scriptsize$\pm$ 0.0} & 71.9 {\scriptsize$\pm$ 0.1} \\
        \rowcolor{Green}
        Ours & 82.0 {\scriptsize$\pm$ 0.0}  & \textbf{79.7} {\scriptsize$\pm$ 0.0}& \textbf{73.7} {\scriptsize$\pm$ 0.0} \\
 	\bottomrule
	\end{tabular}}
	\vspace{-4mm}
\end{wraptable}
We demonstrate that our proposed method can marginally improve the accuracy of SGC from the data perspective and without any change to the original model structure of SGC, which validates the impacts of edges with negative influences. In addition, the performance of the SGC model by eliminating the negative influence edges can outperform other GNN-based methods in most cases.

\vspace{1mm}
\paragraph{Attacking SGC via Edge Removals}

\begin{table}[t]
\caption{Grey-box attacks to GCN via edge removals. A lower performance indicates a more successful attack. The best attacks are in \textbf{bold} font. The number following the dataset name is the pre-attack performance. `-' denotes an out-of-memory issue encountered on GPU with 24GB VRAM.}
\label{tab:PositiveEdge}
    \centering
    \resizebox{0.8\columnwidth}{!}{
    \begin{tabular}{l|ccc|ccc|ccc}
    \toprule
        Dataset & \multicolumn{3}{c}{\textit{Cora} - 81.10\%} & \multicolumn{3}{|c|}{\textit{Citeseer} - 70.07\%} & \multicolumn{3}{c}{\textit{Pubmed} - 79.80\%} \\
        \midrule
        Elimination Rate & 1\% & 3\% & 5\% & 1\% & 3\% & 5\% & 1\% & 3\% & 5\% \\
        \midrule
        DICE        & 79.9\% & 80.1\% & 80.0\% & 71.1\% & 70.3\% & 69.8\% & 79.4\% & 79.7\% & 79.1\% \\
        GraphPoison & 80.0\% & 80.1\% & 79.6\% & 70.2\% & 70.1\% & 70.0\% & 79.4\% & 79.7\% & 79.1\% \\
        MetaAttack  & 79.6\% & 77.1\% & 73.3\% & 70.4\% & 69.3\% & 65.4\% & - & - & - \\
        \rowcolor{Green}
        Ours & \textbf{77.3}\% & \textbf{74.2}\% & \textbf{72.8}\% & \textbf{69.3}\% & \textbf{67.4}\% & \textbf{64.7}\% & \textbf{69.3}\%& \textbf{65.2}\% & \textbf{64.1}\% \\
    \bottomrule
    \end{tabular}}
    \vspace{-4mm}
\end{table}
\vspace{-4mm}

We investigated how to use edge removals to deteriorate SGC performance. Based on the influence analysis, removing an edge with a positive estimated influence can increase the model loss and decrease the model performance on the validation set. Thus, our attack is carried out in the following way. We first calculated the estimated influence of all edges and cumulatively removed edges with the highest positive influence one at a time. Every time we remove a new edge, we retrain the model to obtain the current model performance. We remove 100 edges in total for each experiment.

We present our results in~\cref{fig:positive}. Apparently, in general, the accuracy of SGC on node classification drops significantly. We notice the influence of edges is approximately power-law distributed, where only a small proportion of edges has a relatively significant influence. The performance worsens with increasingly cumulative edge removals on both validation and test sets. The empirical results verify our expectations of edges with a positive estimated influence.

\paragraph{Attacking GCN via Surrogate SGC}

We further explored the impact of removing positive influences edges under adversarial grey-box attack settings. Here, we followed \cite{zügner2018adversarial} to interpret SGC as a surrogate model for attacking the GCN~\citep{kipf2016semi} as a victim model, where the assumption lays under that the increase of loss on SGC can implicitly drop the performance of GCN. We eliminated positive influence edges at different rates $1\%, 3\%, 5\%$ among all edges. The drop in accuracy was compared against DICE~\citep{zugner2018adversarial}, GraphPoison~\citep{bojchevski2019adversarial}, MetaAttack~\citep{zugner2018adversarial}. For a fair comparison, we restrict the compared method can only perturb graph structures via edge removals.

Our results are presented in Table~\ref{tab:PositiveEdge}. Our attack strategy achieves the best performance in all the scenarios of edge eliminations, especially on \textit{Pubmed} with 1\% edge elimination rate. Our attack model outperforms others by over 10\% in accuracy drop. Since we directly estimate the impact of edges on the model parameter change, our attack strategy is more effective in seeking the most vulnerable edges to the victim model. These indicate that our proposed influence on edges can guide the construction of grey-box adversarial attacks on graph structures. 

%-------------------------------------------------------------------------

\subsection{Influence of Node Removal}
\label{sec:robust}

\paragraph{Attacking GCN via Node Removals}

% \begin{table}[t]
% \centering
% \caption{Performance of node removing attack. Lower performance means better attacks. The number after the dataset name means the performance of GCN model without an attack. Victim model's test accuracy averaged over 25 runs on the citation network.}\label{tab:NegativeNode}
% \resizebox{1.\textwidth}{!}{
% \begin{tabular}{l|ccc|ccc|ccc}
% \toprule
% Dataset & \multicolumn{3}{c}{\textit{Cora} - 81.10\% } & \multicolumn{3}{|c|}{\textit{Pubmed} - 79.80\%} & \multicolumn{3}{c}{\textit{Citeseer} - 70.07\%}\\
% \midrule
% Removing Rate\ \ \ \ \ \ \  & 5\% & 10\% & 15\% & 5\% & 10\% & 15\% & 5\% & 10\% & 15\% \\
% \midrule
% Random & 80.4\% & 80.3\% & 80.2\% & 78.9\% & 79.6\% & 77.3\% & 70.6\% & 69.0\% & 69.2\%\\
% Degree & 80.3\% & 78.7\% & 79.0\% & 79.1\% & 79.6\% & 77.4\% & 69.4\% & 68.3\% & 68.4\% \\
% \rowcolor{Green}
% Ours & \textbf{74.7}\% & \textbf{59.8}\% & \textbf{57.9}\% &  \textbf{79.0}\% & \textbf{77.2}\% & \textbf{75.2}\% & \textbf{69.5}\% & \textbf{65.5}\% & \textbf{56.1}\%  \\
% \bottomrule
% \end{tabular}}
% \end{table}

\begin{table}[t]
\centering
\caption{Performance of node removing attack. Lower performance means better attacks. The number after the dataset name means the performance of the GCN model without an attack. The victim model's test accuracy averaged over 25 runs on the citation network.}\label{tab:NegativeNode}
\resizebox{0.9\textwidth}{!}{
\begin{tabular}{l|ccc|ccc|ccc}
\toprule
    Dataset & \multicolumn{3}{c}{\textit{Cora} - 81.10\% } & \multicolumn{3}{|c|}{\textit{Citeseer} - 70.07\%} & \multicolumn{3}{c}{\textit{Pubmed} - 79.80\%} \\
    \midrule
    Removing Rate\ \ \ \ \ \ \  & 5\% & 10\% & 15\% & 5\% & 10\% & 15\% & 5\% & 10\% & 15\% \\
    \midrule
    Random & 80.4\% & 80.3\% & 80.2\% & 70.6\% & 69.0\% & 69.2\% & 78.9\% & 79.6\% & 77.3\% \\
    Degree & 80.3\% & 78.7\% & 79.0\% & 69.4\% & 68.3\% & 68.4\% & 79.1\% & 79.6\% & 77.4\% \\
    \rowcolor{Green}
    Ours  & \textbf{74.7}\% & \textbf{59.8}\% & \textbf{57.9}\% & \textbf{69.5}\% & \textbf{65.5}\% & \textbf{56.1}\% & \textbf{79.0}\% & \textbf{77.2}\% & \textbf{75.2}\% \\
\bottomrule
\end{tabular}}
\end{table}

In this section, we study the impact of training nodes with a positive influence on transductive node classification tasks. Again, we assume that eliminating the positive influence nodes derived from SGC may implicitly harm the GCN model. We sort the nodes by our estimated influence in descending order and cumulatively remove the nodes from the training set. We built two baseline methods, Random and Degree, to compare the accuracy drop in different node removal ratios: $5\%, 10\%, 15\%$. We randomly remove the nodes from the training sets for the Random baseline. We remove nodes by their degree in descending order for the Degree baseline. 

The model performance on GCN drops by a large margin in all three citation network datasets as the selected positive influence node is removed, especially on the Cora dataset. The model outperforms the baseline over $20\%$ on the $15\%$ removing ratio. These results indicate that our estimation of node influence can be used to guide the adversarial attack on GCN in the settings of node removal.

\section{Related Works}
\label{sec:rel}

\paragraph{Influence Functions}

Recently, more efforts have been dedicated to investigating influence functions~\citep{NEURIPS2019_a78482ce,giordano2019swiss,ting2018optimal} in various applications, such as,computer vision~\citep{koh2017understanding}, natural language processing~\citep{han2020explaining}, tabular data~\citep{wang2020less}, causal inference~\citep{alaa2019validating}, data poisoning attack~\citep{fang2020influence,wang2020evasion}, and algorithmic fairness~\citep{li2022achieving}. In this work, we propose a major extension of influence functions to graph-structured data and systemically study how we can estimate the influence of nodes and edges in terms of different editing operations on graphs. We believe our work complements the big picture of influence functions in machine learning applications.

\paragraph{Understanding Graph Data} 

Besides influence functions, there are many other approaches to exploring the underlying patterns in graph data and its elements. Explanation models for graphs~\citep{ying2019gnnexplainer,huang2022graphlime,yuan2021explainability} provide an accessible relationship between the model's predictions and corresponding elements in graphs. They show how the graph's local structure or node features impact the decisions from GNNs. As a major difference, these approaches tackle model inference with fixed parameters, while we focus on a counterfactual effect and investigate the contributions from the presence of nodes and edges in training data to decisions of GNN models in the inference stage.

\paragraph{Adversarial Attacks on Graph}

The adversarial attack on an attributed graph is usually conducted by adding perturbations on the graphic structure or node features \cite{zügner2018adversarial, zheng2021graph}. In addition, \cite{zhang2020adversarial} introduces an adversarial attack setting by flipping a small fraction of node labels in the training set, causing a significant drop in model performance. A majority of the attacker models \cite{zugner2018adversarial, xu2019topology} on graph structure are constructed based on the gradient information on both edges and node features and achieve costly but effective attacking results. These attacker models rely mainly on greedy-based methods to find the graph structure's optimal perturbations. We only focus on the perturbations resulting from eliminating edges and directly estimate the change of loss in response to the removal effect guided by the proposed influence-based approach.

%-------------------------------------------------------------------------

\section{Conclusions}

We have developed a novel influence analysis to understand the effects of graph elements on the parameter changes of GCNs without needing to retrain the GCNs. We chose Simple Graph Convolution due to its convexity and its competitive performance to non-linear GNNs on a variety of tasks. Our influence functions can be used to approximate the changes in model parameters caused by edge or node removals from an attributed graph. Moreover, we provided theoretical bounds on the estimation error of the edge and node influence on model parameters. We experimentally validated the accuracy and effectiveness of our influence functions by comparing its estimation with the actual influence obtained by model retraining. We showed in our experiments that our influence functions could be used to reliably identify edge and node with negative and positive influences on model performance. Finally, we demonstrated that our influence function could be applied to improve model performance and carry out adversarial attacks.

%-------------------------------------------------------------------------

\clearpage
\bibliographystyle{unsrtnat}
\bibliography{references}  %%% Uncomment this line and comment out the ``thebibliography'' section below to use the external .bib file (using bibtex) .

%-------------------------------------------------------------------------
\clearpage
\appendix

\section{Proofs}\label{sec:proof}
\add*
\begin{proof}
Notice the actual model parameters in response to the perturbations $\Delta$ can be denoted as:
\begin{equation*}
\hat{\theta}(\mathbf{x}_j\rightarrow \mathbf{x}_j + \delta){\stackrel{\operatorname{def}}{=}}\underset{\theta \in \Theta}{\arg \min } \frac{1}{N} \sum_{k=1}^{N} \ell \left(\mathbf{x}_{k},  y_{k}\right)
-\frac{1}{N} \ell\left(\mathbf{x}_{j}, y_{j}\right)
+\frac{1}{N} \ell\left(\mathbf{x}_{j} + \delta, y_{j}\right)
\end{equation*}
In this case, the actual change in model parameters in response of the perturbations can be represented as: $\mathcal{I}(\mathbf{x}_j\rightarrow \mathbf{x}_j + \delta)$$=$$\hat{\theta}(\mathbf{x}_j\rightarrow \mathbf{x}_j + \delta)$$-$$\hat{\theta}$.
For estimating $\Delta \theta$, we start by considering the parameter change from up weighting infinite small $\varepsilon$ on $\{\mathbf{x}'_{l}\}$ and down weight infinite small $\varepsilon$ on $\{\mathbf{x}_{l}\}$ where $\forall l \in L$. By definition, the model parameter in response of perturbation $\ptheta$ can be represented as:  
\begin{equation}
    \ptheta{\stackrel{\operatorname{def}}{=}}\underset{\theta \in \Theta}{\arg \min } \frac{1}{N} \sum_{k=1}^{N} \ell \left(\mathbf{x}_{k},  y_{k}\right)
    -\varepsilon \ell\left(\mathbf{x}_{j}, y_{j}\right)
    +\varepsilon \ell\left(\mathbf{x}_{j} + \delta, y_{j}\right)
\end{equation}
The change of model parameter due to the modification of the group of data's weight on loss be: 
\begin{equation}
    \Delta \theta_{\varepsilon} =\ptheta - \hat{\theta}
\end{equation}
 Since $\hat{\theta}_{\varepsilon}$ minimize the changed loss function under perturbation, take the derivative:
\begin{equation}
\begin{aligned}
0 = & \frac{1}{N} \sum_{k=1}^{N} \nabla_{\ptheta} \ell\left(\mathbf{x}_{k}, y_{k}\right)
-\varepsilon \nabla_{\ptheta}\ell\left(\mathbf{x}_{j}, y_{j}\right)\\
&+\varepsilon \nabla_{\ptheta} \ell\left(\mathbf{x}_{j} + \delta, y_{j}\right)
\end{aligned}
\end{equation}
Apply the first order Taylor expansion of $\ptheta$ on $\hat{\theta}$ on the right side of the equation, we have:
\begin{equation}
\begin{aligned}
0 = & {\left[\frac{1}{N} \sum_{k=1}^{N} \nabla_{\theta} \ell\left(\mathbf{x}_{k}, y_{k} \right)
+ \varepsilon \nabla_{\theta} \ell \left(\mathbf{x}_{j} + \delta, y_{j}\right)
- \varepsilon \nabla_{\theta} \ell \left(\mathbf{x}_{j}, y_{j}\right)\right]}
+ \\
& {\left[\frac{1}{N} \sum_{k=1}^{N} \nabla_{\theta}^{2} \ell\left(\mathbf{x}_{k}, y_{k} \right)
+ \varepsilon \nabla_{\theta}^{2} \ell \left(\mathbf{x}_{j} + \delta, y_{j}\right)
- \varepsilon \nabla_{\theta}^{2} \ell \left(\mathbf{x}_{j}, y_{j}\right)\right]  \cdot  \Delta \theta_{\varepsilon}} + o(\Delta \theta_{\varepsilon}^{2})
\end{aligned}
\end{equation}
Since $\hat{\theta}$ minimize the loss function without perturbation, $\frac{1}{N} \sum_{k=1}^{N} \nabla_{\ptheta} \ell\left(\mathbf{x}_{k}, y_{k} \right)$$=$$0$. Dropping $o(\Delta \theta_{\varepsilon}^{2})$ term, We have:

\begin{equation}
\begin{aligned}
\Delta \theta_{\varepsilon} \approx &-\left[\frac{1}{N} \sum_{k=1}^{N} \nabla_{\theta}^{2} \ell\left(\mathbf{x}_{k}, y_{k} \right)
+ \varepsilon \nabla_{\theta}^{2} \ell \left(\mathbf{x}_{j} + \delta, y_{j}\right)
- \varepsilon \nabla_{\theta}^{2} \ell \left(\mathbf{x}_{j}, y_{j}\right)\right]^{-1}\cdot \\
&\left[\varepsilon \nabla_{\theta} \ell \left(\mathbf{x}_{j} + \delta, y_{j}\right)
- \varepsilon \nabla_{\theta} \ell \left(\mathbf{x}_{j}, y_{j}\right)\right]
\end{aligned}
\end{equation}
Take the derivative of $\Delta \theta_{\varepsilon}$ over $\varepsilon$, by dropping $O(\varepsilon)$ terms we have:

\begin{equation}
\begin{aligned}\label{eq:episonapprox}
\frac{\partial \Delta \theta_{\varepsilon}}{\partial \varepsilon}=&-\frac{1}{N} \sum_{k=1}^{N} \nabla_{\hat{\theta}
}^{2} \ell\left(\mathbf{x}_{k}, y_{k} \right)^{-1}
\left[\nabla_{\hat{\theta}} \ell \left(\mathbf{x}_{j} + \delta, y_{j}\right)
- \nabla_{\hat{\theta}} \ell \left(\mathbf{x}_{j}, y_{j}\right)\right]\\
&=-H_{\hat{\theta}}^{-1}\sum_{l\in L} (\nabla \ell \left(\mathbf{x}_{j} + \delta, y_{l}\right)-\nabla \ell \left(\mathbf{x}_{j}, y_{j}\right))
\end{aligned}
\end{equation}

For sufficient large $N$, by setting $\varepsilon$ to $\frac{1}{N}$, the changed we can approximate the actual change in model parameters using: $\mathcal{I}(\mathbf{x}_j\rightarrow \mathbf{x}_j + \delta)$$=$$\hat{\theta}(\mathbf{x}_j\rightarrow \mathbf{x}_j + \delta)$$-$$\hat{\theta}$$\approx$$\hat{\theta}_{\varepsilon}$$-$$\hat{\theta}$. Plugging into Eq.~(\ref{eq:episonapprox}), we finish the proof:
\begin{equation}
\begin{aligned}
\mathcal{I}(\mathbf{x}_j\rightarrow \mathbf{x}_j + \delta) &\approx -H_{\hat{\theta}}^{-1} (\nabla \ell \left(\mathbf{x}_j + \delta, y_{l}\right)-\nabla \ell \left(\mathbf{x}_{j}, y_{j}\right)) \\
&= -H_{\hat{\theta}}^{-1}\nabla \ell \left(\mathbf{x}_j + \delta, y_{l}\right) +  H_{\hat{\theta}}^{-1}\nabla \ell \left(\mathbf{x}_j, y_{l}\right)\\
&= \infl(+(\mathbf{x}_j + \delta))+ \infl(-\mathbf{x}_j).
\end{aligned}
\end{equation}
\end{proof}

\inflnode*
\begin{proof}
Similarly to the edge removal, we first calculate the node representation change incurred from the removal of the node $v_i$ of a 2-layer SGC as follow: 
\begin{equation}\label{eq:delta2}
\begin{aligned}
\Delta(-v_{i}) &= \left[(\mathbf{D}_{-v_{i}}^{-\frac{1}{2}}\mathbf{A}_{-v_{i}}\mathbf{D}_{-v_{i}}^{-\frac{1}{2}})^{2} - (\mathbf{D}^{-\frac{1}{2}}\mathbf{A}\mathbf{D}^{-\frac{1}{2}})^{2} \right]\mathbf{X}.
\end{aligned}
\end{equation}
The above change will affect a set of nodes, including the node $v_i$ itself and the 2-hop neighbors of the node $v_i$ connected neighbors. A set of nodes $S$$=$$\{s| s \in \mathcal{N}_{i} \cup_{j \in \mathcal{N}_{i}} \mathcal{N}_{j} \}$ capture the changed node embeddings in the training set, \textit{i.e.}, $\delta_s\ne 0$, where $\Delta_{-v_{i}}=\{\delta_i\}_{i=1}^N$ in Eq.~(\ref{eq:delta2}). The model parameter change of the removal of the node $v_{i}$ can be characterized by removing the representation of the node $v_i$ if the node $v_i$ is a training sample, and the node representation change from the set $S$. Thus, we have 
\begin{equation}
\begin{aligned}
\mathcal{I}(-v_{i})&= - \mathbbm{1}_{v_i\in V_\text{train}}\cdot\mathcal{I}\left(\mathbf{z}_{i}, y_{i}\right)+\sum_{s\in \{S \backslash v_{i}\}}(\mathcal{I}\left(\mathbf{z}'_{s}, y_{s}\right)- \mathcal{I}\left(\mathbf{z}_{s}, y_{s}\right)) \\
&= - \mathbbm{1}_{v_i\in V_\text{train}} \cdot \hessinvtheta \nabla \mathcal{L}_\textup{CE}\left(\mathbf{z}_{i}, y_{i}\right)  - \hessinvtheta \sum_{s\in S \backslash v_{i}} (\nabla \mathcal{L}_\textup{CE} \left(\mathbf{z}'_s, y_{s}\right)-\nabla \mathcal{L}_\textup{CE} \left(\mathbf{z}_{s}, y_{s}\right)).
\end{aligned}
\end{equation}
We finish the proof.
\end{proof}

\ActNt*
\begin{proof}
In this proof, we utilize a one-step Newton approximation as an intermediary to estimate the error bound of the change in model parameters, \textit{i.e.,} 
\begin{equation}
    Err(-e_{ij}) = \left[ \mathcal{I}^{*}(-e_{ij})- \mathcal{I}^{Nt}(-e_{ij}) \right] +
    \left[ \mathcal{I}^{Nt}(-e_{ij}) - \mathcal{I}(-e_{ij}) \right], 
\end{equation}
where $\mathcal{I}^{*}(-e_{ij})$$=$$\Delta \hat{\theta}_{\varepsilon}$$=$$\hat{\theta}_{\varepsilon}$$-$$\hat{\theta}$, $\mathcal{I}^{Nt}(-e_{ij})$ is the one-step Newton approximation with the model parameter $\hat{\theta}_{Nt}$$=$$\hat{
\theta} + \Delta \hat{\theta}_{Nt}$. According to \cite{boyd2004convex} (Section 9.5.1), $\Delta \hat{\theta}_{Nt}$ can be calculated as follows:
\begin{equation}\label{eq:deltatheta}
\begin{aligned}
\Delta \hat{\theta}_{Nt}=&-\left(\mathbf{H}_{\hat{\theta}} + \lambda \mathbf{I}\right)^{-1} \cdot \frac{1}{N}(\sum_{i=1}^{N} \nabla_{\hat{\theta}} \ell(\mathbf{z}_{i}, y_{i}) 
+ \sum_{v_l \in L} \nabla_{\hat{\theta}} \ell(\mathbf{z}'_{l}, y_{l})\\
&\ \ \ \ \ \ \ \ \ \ \ \ \ \ \ \ \ \ \ \ \ \ \ \ \ \ \ \ \ \ \ \ \ \ \ -\sum_{v_l \in L} \nabla_{\hat{\theta}} \ell(\mathbf{z}_{l}, y_{l}) + \lambda\| \hat{\theta}\|_{2}).
\end{aligned}
\end{equation}

In the following, we will calculate the bound of $\mathcal{I}^{*}(-e_{ij})- \mathcal{I}^{Nt}(-e_{ij})$ and $\mathcal{I}^{Nt}(-e_{ij}) - \mathcal{I}(-e_{ij})$ as two separate steps and combine them together. Here we define the before and after objective functions with the removal of edge $e_{ij}$ as follows:
\begin{equation}\label{eq:la}
\begin{aligned}
        \celoss_b(\theta) &= \sum_{i=1}^n \ell(\mathbf{z}_i,y_i) +\frac{\lambda}{2} \|\theta\|^2_2,\\
        \celoss_a(\theta) &= \sum_{i=1}^n \ell(\mathbf{z}_i,y_i) + \sum_{v_l\in L} \ell(\mathbf{z}'_l,y_l) - \sum_{v_l\in L} \ell(\mathbf{z}_l,y_l) +\frac{\lambda}{2} \|\theta\|^2_2.
\end{aligned}
\end{equation}

\textit{Step I: Bound of $\mathcal{I}^{*}(-e_{ij})- \mathcal{I}^{Nt}(-e_{ij})$}.

Due to that SGC model is convex on $\theta$, we take the second derivative of $\celoss_a(\theta)$ and have 
\begin{equation}
\lambda \mathbf{I} + \frac{1}{N}{\left[\sum_{i=1}^{N} \nabla^{2} \celoss \left(\mathbf{z}_{i}, y_{i} \right)+ \sum_{v_l \in L} \nabla^{2} \celoss\left(\mathbf{z}'_{l}, y_{l} \right)- \sum_{v_l \in L} \nabla^{2} \celoss\left(\mathbf{z}_{l}, y_{l}\right)\right]} \succ 0. \\
\end{equation}
To simplify the above equation, we define $\sigma'_{min}$ and $\sigma'_{max}$ are the smallest and largest eigenvalues of $\nabla^{2} \ell\left(\mathbf{z}'_{l}, y_{l} \right)$ and $\sigma_{min}$ and $\sigma_{max}$ are the smallest and largest eigenvalues of $\nabla^{2} \celoss\left(\mathbf{z}_{l}, y_{l} \right)$. Then we have \begin{equation}
    \mathbf{I} \cdot \left(\lambda + \frac{(N - |L|) \cdot \sigma_{min} + |L| \cdot \sigma'_{min}}{N}\right) \succ 0.
\end{equation}
Therefore, the SGC loss function corresponds to the removal of edge is strictly convex with the parameter $\left(\lambda + \frac{(N - |L|) \cdot \sigma_{min} + |L| \cdot \sigma'_{min}}{N}\right)$. By this convexity property and the implications of strong convexity~\citep{boyd2004convex} (Section 9.1.2), we can bound $\mathcal{I}^{*}(-e_{ij})- \mathcal{I}^{Nt}(-e_{ij})$ with the first derivative of  SGC loss function as follows:
\begin{equation}\label{eq:step1}
\begin{aligned}
&\mathcal{I}^{*}(-e_{ij})- \mathcal{I}^{Nt}(-e_{ij}) \\
=& \|\Delta \hat{\theta}_{\varepsilon}-\Delta\hat{\theta}_{Nt}\|_2 =  \|(\Delta \hat{\theta}_{\varepsilon} + \hat{\theta})-(\Delta\hat{\theta}_{Nt}+\hat{\theta})\|_2= \|\hat{\theta}_{\varepsilon}-\hat{\theta}_{Nt} \|_{2} \\ \leq &
\frac{2N}{N\lambda + (N - |L|)\sigma_{min} + |L|\sigma'_{min}} \cdot \Big\Arrowvert  \frac{1}{N}(\sum_{i=1}^{N} \nabla_{\hat{\theta}_{Nt}} \ell(\mathbf{z}_{i}, y_{i})
+ \sum_{v_l \in L} \nabla_{\hat{\theta}_{Nt}} \ell(\mathbf{z}'_{l}, y_{l})\\
&\ \ \ \ \ \ \ \ \ \ \ \ \ \ \ \ \ \ \ \ \ \ \ \ \ \ \ \ \ \ \ \ \ \ \ \ \ \ \ \ \ \ \ \ \ \ \ \ \ \ \ \ \ \ \ \ \ \ \ \ - \sum_{v_l \in L} \nabla_{\hat{\theta}_{Nt}} \ell(\mathbf{z}_{l}, y_{l}) + \lambda\| \hat{\theta}_{Nt}\|_{2}) \Big\Arrowvert_2.\\
% &\| \frac{1}{n}\left[\sum_{k=1}^{n} \nabla_{\hat{\theta}_{Nt}} \celoss(\mathbf{h}_{k}, y_{k})
% + \sum_{v_l \in L} \nabla_{\hat{\theta}_{Nt}} \celoss(\mathbf{h}'_{l}, y_{l})
% - \sum_{v_l \in L} \nabla_{\hat{\theta}_{Nt}} \celoss(\mathbf{h}_{l}, y_{l}) + \lambda\| \hat{\theta}_{Nt}\|_{2}\right] \|_{2}
\end{aligned}
\end{equation}
If we take a close look at the second term in the above equation, we notice it is equal the first derivative of $\celoss_a(\theta)$, \textit{i.e.},
\begin{equation}\label{eq:deltala}
   \nabla_{\theta} \ell_a(\hat{\theta}_{Nt}) =  \frac{1}{N}(\sum_{k=1}^{N} \nabla_{\hat{\theta}_{Nt}} \ell(\mathbf{z}_{k}, y_{k})
+ \sum_{v_l \in L} \nabla_{\hat{\theta}_{Nt}} \ell(\mathbf{z}'_{l}, y_{l}) - \sum_{v_l \in L} \nabla_{\hat{\theta}_{Nt}} \ell(\mathbf{z}_{l}, y_{l}) + \lambda\| \hat{\theta}_{Nt}\|_{2}).
\end{equation}
Therefore, we focus on bounding $\|\nabla_{\theta} \celoss_a(\hat{\theta}_{Nt})\|_2$ in the following.
\begin{equation}
\begin{aligned}
&\|\nabla_{\theta} \celoss_a(\hat{\theta}_{Nt})\|_2 = \|\nabla_{\theta} \celoss_a(\hat{\theta} + \Delta \hat{\theta}_{Nt})\|_2 \\
=& \|\nabla_{\theta} \celoss_a(\hat{\theta}+ \Delta \hat{\theta}_{Nt}) - \nabla_{\theta} \celoss_a(\hat{\theta}) + \nabla_{\theta} \celoss_a(\hat{\theta}) \|_2\\
=& \|\nabla_{\theta} \celoss_a(\hat{\theta} + \Delta \hat{\theta}_{Nt}) - \nabla_{\theta} \celoss_a(\hat{\theta}) - \nabla^2_{\theta} \celoss_a(\hat{\theta})\Delta \hat{\theta}_{Nt} \|_2
\end{aligned}    
\end{equation}
The above last equation holds due to the definition of $\Delta \hat{\theta}_{Nt}$ in Eq.~(\ref{eq:deltatheta}).

For any continuous function $f$ and any inputs a and b, there exists $f(a + b) - f(a) - bf'(a)$$=$$\int_{0}^{1}b \cdot ( f'(a + bt) - f'(a))dt$. Based on that, we can rewrite $\|\nabla_{\theta} \celoss_a(\hat{\theta}_{Nt})\|_2$ as follows:
\begin{equation}\label{eq:int}
\begin{aligned}
    \|\nabla_{\theta} \celoss_a(\hat{\theta}_{Nt})\|_2 
=& \|\nabla_{\theta} \celoss_a(\hat{\theta}+ \Delta \hat{\theta}_{Nt}) - \nabla_{\theta} \celoss_a(\hat{\theta}) - \nabla^2_{\theta} \celoss_a(\hat{\theta})\Delta \hat{\theta}_{Nt} \|_2\\
=& \Big\Arrowvert \int_{0}^{1} \Delta \hat{\theta}_{Nt} (
\nabla^2_{\theta} \celoss_a(\hat{\theta}+\Delta \hat{\theta}_{Nt}\cdot t)
-\nabla^2_{\theta} \celoss_a(\hat{\theta}))dt \Big\Arrowvert_2.
\end{aligned}
\end{equation}
We assume the loss function $\ell$ on is twice differentiable and the second derivative of the loss function is Lipschitz continuous at $\theta$, with parameter $C$. Here C is controlled by the third derivative ({Curvature}) of the loss function $\ell$. Thus, we have 
\begin{equation}\label{eq:curvature}
    \| \nabla^2_{\theta} \ell({\theta_1}) - \nabla^2_{\theta} \ell({\theta_2})\|_2 \le C\cdot \|\theta_1 - \theta_2\|_2.
\end{equation}
Then we take Eq.~(\ref{eq:curvature}) into Eq.~(\ref{eq:int}) and have 
\begin{equation}\label{eq:deltalanorm}
\begin{aligned}delt
&\|\nabla_{\theta} \celoss_a(\hat{\theta}_{Nt})\|_2 \\
\leq & \|NC \Delta \hat{\theta}_{Nt} \int_0^1tdt\|_2 = \frac{NC}{2} \|\Delta \hat{\theta}_{Nt}\|_2^2 = \frac{NC}{2} \| \nabla^2_{\theta} \celoss_a(\hat{\theta})^{-1}\cdot \nabla_{\theta} \celoss_a(\hat{\theta})\|^2_2\\
% \leq &  \frac{nC}{2} \|\nabla^2_{\theta} \celoss_a(\hat{\theta})^{-1} \|_2^2 \cdot \|\nabla_{\theta} \celoss_a(\hat{\theta}) \|_2^2\\
\leq & \frac{NC}{2} \cdot \frac{N^{2}}{(N\lambda + (N - |L|) \cdot \sigma_{min} + |L| \cdot \sigma'_{min})^{2}} \cdot \|\sum_{l\in L} (\nabla_{\hat{\theta}} \ell \left(\mathbf{z}'_l, y_{l}\right)-\nabla_{\hat{\theta}} \ell \left(\mathbf{z}_{l}, y_{l}\right))\|^{2}_{2}.
\end{aligned}
\end{equation}
The above last inequation holds according to the bound of $\nabla^2_{\theta} \celoss_a(\hat{\theta})^{-1}$ and Eq.~(\ref{eq:la}).

Combining Eq.~(\ref{eq:step1}), (\ref{eq:deltala}) and~(\ref{eq:deltalanorm}), we finish the bound of $\mathcal{I}^{*}(-e_{ij})- \mathcal{I}^{Nt}(-e_{ij})$ as follows:
\begin{equation}\label{eq:step1c}
\begin{aligned}
&\|\mathcal{I}^{*}(-e_{i, j})- \mathcal{I}^{Nt}(-e_{i, j})\|_{2}\\
\leq & \frac{N^{3}C}{(N\lambda + (N - |L|)\sigma_{min} + \sigma'_{min}|L|)^{3}} \cdot \|\sum_{v_l\in L} (\nabla_{\hat{\theta}} \ell \left(\mathbf{z}'_l, y_{l}\right)-\nabla_{\hat{\theta}} \ell \left(\mathbf{z}_{l}, y_{l}\right))\|^{2}_{2}.
\end{aligned}
\end{equation}
We finish Step I.

\textit{Step II: Bound of $\mathcal{I}^{Nt}(-e_{ij}) - \mathcal{I}(-e_{ij})$}.

By the definition of $\mathcal{I}^{Nt}(-e_{ij})$ and $ \mathcal{I}(-e_{ij}))$, we have: 
\begin{equation}{\label{eq:prevcomplex}}
\begin{aligned}
&\mathcal{I}^{Nt}(-e_{ij})-\mathcal{I}(-e_{ij}) \\
=& \Biggl\{ \left( \lambda \mathbf{I} + \frac{1}{N}{\left[\sum_{k=1}^{n} \nabla_{\hat{\theta}}^{2} \ell \left(\mathbf{z}_{k}, y_{k} \right)+ \sum_{v_l \in L} \nabla_{\hat{\theta}}^{2} \ell\left(\mathbf{z}'_{l}, y_{l} \right)- \sum_{v_l \in L} \nabla_{\hat{\theta}}^{2} \ell\left(\mathbf{z}_{l}, y_{l}\right)\right]} \right) ^{-1}\\
&-\left( \lambda \mathbf{I} + \frac{1}{N}{\sum_{k=1}^{n} \nabla_{\hat{\theta}}^{2} \ell \left(\mathbf{z}_{k}, y_{k} \right)} \right)^{-1} \Biggl\} \cdot \Biggl\{ \sum_{v_l\in L} (\nabla_{\hat{\theta}} \ell \left(\mathbf{z}'_l, y_{l}\right)-\nabla \ell \left(\mathbf{z}_{l}, y_{l}\right)) \Biggl\}.
\end{aligned}
\end{equation}
For simplification, we use matrix $\mathbf{A}$, $\mathbf{B}$ and $\mathbf{C}$ for the following substitutions: 
\begin{equation}
\begin{aligned}
&\mathbf{A}=\lambda \mathbf{I} + \frac{1}{N}{\left[\sum_{k=1}^{n} \nabla_{\hat{\theta}}^{2} \ell \left(\mathbf{z}_{k}, y_{k} \right) - \sum_{v_l \in L} \nabla_{\hat{\theta}}^{2} \ell\left(\mathbf{z}_{l}, y_{l}\right)\right]}, \\
&\mathbf{B}=\frac{1}{N} \sum_{v_l \in L} \nabla_{\hat{\theta}}^{2} \ell\left(\mathbf{z}'_{l}, y_{l} \right), \ \ \textup{and} \ \
\mathbf{C}=\frac{1}{N} \sum_{v_l \in L} \nabla_{\hat{\theta}}^{2} \ell\left(\mathbf{z}_{l}, y_{l}\right),
\end{aligned}
\end{equation}
where $\mathbf{A}$, $\mathbf{B}$ and $\mathbf{C}$ are positive definite matrix and have the following properties:
\begin{equation}
\begin{aligned}
\lambda + \frac{(N-|L|)\sigma_{max}}{N} \succ &\mathbf{A} \succ \lambda + \frac{(N - |L|) \sigma_{min}}{N}, \\
\frac{|L|\sigma'_{max}}{N} \succ &\mathbf{B} \succ \frac{|L|\sigma'_{min}}{N}, \ \ \textup{and} \ \
\frac{|L|\sigma_{max}}{N}\succ \mathbf{C} \succ \frac{|L|\sigma_{min}}{N}. \\
\end{aligned}
\end{equation}
Therefore, we have 
\begin{equation}
    \mathcal{I}^{Nt}(-e_{ij})-\mathcal{I}(-e_{ij}) = ((\mathbf{A} + \mathbf{B})^{-1} - (\mathbf{A} + \mathbf{C})^{-1}) \cdot \Biggl\{ \sum_{v_l\in L} (\nabla_{\hat{\theta}} \ell \left(\mathbf{z}'_l, y_{l}\right)-\nabla_{\hat{\theta}} \ell \left(\mathbf{z}_{l}, y_{l}\right)) \Biggl\},
\end{equation}
where $(\mathbf{A} + \mathbf{B})^{-1} - (\mathbf{A} + \mathbf{C})^{-1} \prec \frac{N}{N\lambda +(N-|L|)\sigma_{min} + |L|min(\sigma'_{min}, \sigma_{min})} \mathbf{I}$. 

% \begin{equation}
% \begin{aligned}
% &\left( \lambda \mathbf{I} + \frac{1}{n}{\left[\sum_{i=1}^{n} \nabla^{2} \celoss \left(\mathbf{z}_{i}, y_{i} \right)+ \sum_{v_l \in L} \nabla^{2} \celoss\left(\mathbf{h}_{l}, y_{l} \right)- \sum_{v_l \in L} \nabla^{2} \celoss\left(\mathbf{h}_{l}, y_{l}\right)\right]} \right) ^{-1} \\
% &-\left( \lambda I + \frac{1}{n}{\sum_{i=1}^{n} \nabla^{2} \celoss \left(\mathbf{h}_{i}, y_{i} \right)} \right)^{-1}\\
% =& (\mA + \mB)^{-1} - (\mA + \mC)^{-1} \\
% \prec& \frac{n}{n\lambda +(n-m)\sigma_{min} + m\sigma'_{min}} \mathbf{I}
% \end{aligned}
% \end{equation}
The $l_2$ norm of the error between our predicted influence and Newton approximation can be bounded as follows:
\begin{equation}\label{eq:step2c}
\begin{aligned}
&\|\mathcal{I}^{Nt}(-e_{ij})-\mathcal{I}(-e_{ij}) \|_{2}\\
\leq &\frac{N}{N\lambda +(N-|L|)\sigma_{min} +  \min(\sigma'_{min}, \sigma_{min})|L|} \cdot \|\sum_{v_l\in L} (\nabla \ell_{\hat{\theta}} \left(\mathbf{z}'_l, y_{l}\right)-\nabla \ell_{\hat{\theta}} \left(\mathbf{z}_{l}, y_{l}\right))\|_{2}.
\end{aligned}
\end{equation}
We finish Step II.

Combining the conclusion in Step I and II in Eq.~(\ref{eq:step1c}) and~(\ref{eq:step2c}), we have the error between the actual influence and our predicted influence as:
% frac{n^{3}C}{(n\lambda + (n - m)\sigma_{min} + m\sigma'_{min})^{3}}
\begin{equation}
\begin{aligned}
&Err(-e_{ij})\\
\leq &\| \mathcal{I}^{*}(-e_{ij})- \mathcal{I}^{Nt}(-e_{ij}) \|_{2} +
\| \mathcal{I}^{Nt}(-e_{ij}) - \mathcal{I}(-e_{ij}) \|_{2}\\
= &\frac{N^{3}C}{(N\lambda + (N - |L|)\sigma_{min} + |L|\sigma'_{min})^{3}} \cdot \|\sum_{v_l\in L} (\nabla_{\hat{\theta}} \ell \left(\mathbf{z}'_l, y_{l}\right)-\nabla_{\hat{\theta}} \ell \left(\mathbf{z}_{l}, y_{l}\right))\|^{2}_{2} \\
&+  \frac{N}{N\lambda +(N-|L|)\sigma_{min} +  \min(\sigma'_{min}, \sigma_{min})} \cdot \|\sum_{v_l\in L} (\nabla_{\hat{\theta}} \ell \left(\mathbf{z}'_l, y_{l}\right)-\nabla_{\hat{\theta}} \ell \left(\mathbf{z}_{l}, y_{l}\right))\|_{2}.
\end{aligned}
\end{equation}

We finish the whole proof.
\end{proof}

\begin{restatable}{corollary}{corActNt}
\label{cor:ActNt}
Let $\eigen\geq 0$ denote the smallest eigenvalue of all eigenvalues of Hessian matrices $\nabla_{\hat{\theta}}^2\ell(\mathbf{z}_i,y_i),\forall v_i\in\vtrain$ of the original model $\hat{\theta}$. Let $\eigen'\geq 0$ denote the smallest eigenvalue of all eigenvalues of Hessian matrices $\nabla_{{\hat{\theta}}(-v_i)}^2\ell(\mathbf{z}_i,y_i),\forall v_i\in\vtrain$ of the retrained model $\hat{\theta}(-v_i)$ with $v_i$ removed from graph $G$. Use $S$ denote the set $\{v: \mathbf{z}'\neq\mathbf{z}\}$ containing affected nodes from the node removal, and $\text{Err}(-v_i) = \|\infl^{*}(-v_i)-\infl(-v_i)\|_2$. We have the following upper bound on the estimated error of model parameters' change:
\begin{equation}
\begin{aligned}
\text{Err}(-v_i) \leq& \frac{N^{3}m^2 C}{((N-1)\lambda + (N-|S|)\eigen + \eigen'|S|)^{3}}  + \frac{(N-1)m}{N\lambda + (N-|S|)\eigen + \min(\eigen,\eigen')|S|} \\
&+ \frac{N^{3} C}{(N\lambda + (N-1)\eigen)^{3}} \cdot \|\ell \left(\mathbf{z}'_i, y_{i}\right)\|_{2}^{2} + \frac{N}{N\lambda + N\sigma_{min}}\cdot \|\ell \left(\mathbf{z}'_i, y_{i}\right)\|_{2}
\end{aligned}
\end{equation}
where $m = \|\sum_{v_s\in S} [\grad(\mathbf{z}'_s,y_s) - \grad(\mathbf{z}_s,y_s)] - \grad(\mathbf{z}_i, y_i) \|_{2}$.
\end{restatable}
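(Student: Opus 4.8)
\textbf{Proof proposal for \cref{cor:ActNt}.}

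The plan is to mirror the two-step argument of \cref{them:ActNt} but to account for the extra term $\infl(-\mathbf{z}_i)$ that appears in the node-removal influence \eqref{eq:node}. Recall that removing $v_i$ decomposes into two independent perturbations: (a) deleting the loss term $\ell(\mathbf{z}_i,y_i)$ from the ERM objective when $v_i\in\vtrain$, which is a standard i.i.d.-style sample removal; and (b) the representational shift $\mathbf{z}\to\mathbf{z}+\delta(-v_i)$ on the affected set $S$, which is exactly the edge-removal-type perturbation already bounded in \cref{them:ActNt}. Since influence estimates are additive by \cref{lemma:add}, $\text{Err}(-v_i) \le \text{Err}_{(a)} + \text{Err}_{(b)}$, where $\text{Err}_{(b)}$ is handled by re-running the proof of \cref{them:ActNt} verbatim with $L$ replaced by $S$, with the subtlety that the denominators now carry $(N-1)$ in place of $N$ on the regularization contribution because removing $v_i$ from $\vtrain$ shrinks the effective sample count. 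The combined gradient entering the bound is $m = \|\sum_{v_s\in S}[\grad(\mathbf{z}'_s,y_s)-\grad(\mathbf{z}_s,y_s)] - \grad(\mathbf{z}_i,y_i)\|_2$, reflecting that both perturbations feed a single retraining step.

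First I would set up the before/after objectives analogous to \eqref{eq:la}: $\celoss_b$ includes all $N$ loss terms plus regularization, while $\celoss_a$ drops $\ell(\mathbf{z}_i,y_i)$, adds $\sum_{v_s\in S}\ell(\mathbf{z}'_s,y_s)$, and subtracts $\sum_{v_s\in S}\ell(\mathbf{z}_s,y_s)$. Then I would compute the one-step Newton update $\Nttheta$ for $\celoss_a$ at $\hat\theta$ exactly as in \eqref{eq:deltatheta}, noting that its gradient piece is precisely the combined residual whose norm is $m$, and that the Hessian of $\celoss_a$ is strictly convex with parameter at least $\lambda + \frac{(N-1-|S|)\eigen + |S|\eigen'}{N}$ (the $N-1$ accounting for the removed node). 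Step~I then bounds $\|\infl^*(-v_i) - \infl^{Nt}(-v_i)\|_2$ by $\frac{NC}{2}\|\Nttheta\|_2^2$ via the Lipschitz-Hessian integral trick of \eqref{eq:int}--\eqref{eq:curvature}, yielding a cubic-denominator term with numerator $N^3 m^2 C$. Step~II bounds $\|\infl^{Nt}(-v_i) - \infl(-v_i)\|_2$ by the resolvent difference $\|(\mathbf{A}+\mathbf{B})^{-1}-(\mathbf{A}+\mathbf{C})^{-1}\|_2 \cdot (\text{gradient norm})$ exactly as in \eqref{eq:prevcomplex}--\eqref{eq:step2c}, giving the linear-denominator term with numerator $(N-1)m$. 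The separate summands involving $\|\ell(\mathbf{z}'_i,y_i)\|_2$ and $\|\ell(\mathbf{z}'_i,y_i)\|_2^2$ come from isolating contribution (a) — the pure removal of $v_i$'s own loss term — and bounding it by the same Newton-approximation machinery applied to the single-sample deletion, whose effective strong-convexity parameter is $\lambda + \frac{(N-1)\eigen}{N}$ in the cubic term and $\lambda + \eigen$ in the linear term.

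The main obstacle I expect is bookkeeping the denominators consistently: the node removal simultaneously changes the number of terms in the sum (affecting the $\frac{1}{N}$ vs.\ $\frac{1}{N-1}$ normalization and hence which constant multiplies $\lambda$) and alters which Hessian eigenvalue lower bounds ($\eigen$ vs.\ $\eigen'$) apply to the $|S|$ shifted terms versus the $N-|S|$ (or $N-1-|S|$) unchanged ones. One must be careful that the $\min(\eigen,\eigen')$ appearing in Step~II arises because the resolvent bound needs a lower bound valid for \emph{both} $\mathbf{A}+\mathbf{B}$ and $\mathbf{A}+\mathbf{C}$ simultaneously, and that decoupling perturbation (a) from (b) — so that the single-sample-removal term and the representation-shift term can each be bounded by an independent Newton step and then summed via \cref{lemma:add} — is legitimate only up to the first-order additivity that the influence functions already assume. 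Once the accounting is fixed, each individual estimate is a routine repetition of the corresponding step in the proof of \cref{them:ActNt}, and assembling the four resulting terms gives the stated bound.
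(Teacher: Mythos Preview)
Your proposal is correct and follows essentially the same route as the paper: decompose $\text{Err}(-v_i)$ by the triangle inequality into (a) the error from deleting the single loss term $\ell(\mathbf{z}_i,y_i)$ and (b) the error from the representational shift on $S$, then bound each piece by rerunning the two-step Newton argument of \cref{them:ActNt} (with $L$ replaced by $S$ for part (b), and with a single removed point for part (a)). Your discussion of the $(N{-}1)$-vs-$N$ bookkeeping and the provenance of $\min(\eigen,\eigen')$ is in fact more careful than the paper's own proof, which simply asserts the substitution $L\to S$ and the single-sample specialization without tracking those constants explicitly.
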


\begin{proof}
We provide a simple proof for the error bound of removing a complete nodes. Notice that this error can be decomposed into two parts, 1, the error or removing a single node embedding $\mathbf{z}_{i}$ and 2, the error of adding $\mathbf{z}'_{s}$ and removing $\mathbf{z}_{s}$, where $s$$\in$$S$.
where we have 
\begin{equation*}
\begin{aligned}
Err(-v_{i}) \leq \sum_{s\in S}Err(z_s \rightarrow z'_s) + Err(-z_{i})
\end{aligned}
\end{equation*}
Notice that Eq.~\cref{them:ActNt} proofs the error bound of $Err(-v_{i})$. In the proving process, we decompose the problem into deriving the error bound by adding $\mathbf{z}'_{l}$ and removing $\mathbf{z}_{l}$ where $l$$\in$$L$, where $L$ is the set of changed node embedding caused by removing an edge from the graph. Following the same proving setting of Eq.~\cref{them:ActNt}, Again, notice that $S$ is the set of changed node embedding caused by removing a node from the graph. We simply substitute $L$ by $S$, we have the error bounds for $\sum_{s\in S}Err(z_s \rightarrow z'_s)$. 
\begin{equation*}
\begin{aligned}
\sum_{s\in S}Err(z_s \rightarrow z'_s) \leq \frac{N^{3}m^2 C}{N\lambda + (N-|S|)\eigen + \eigen'|S|)^{3}} \\ + \frac{(N-1)m}{N\lambda + (N-|S|)\eigen + \min(\eigen,\eigen')|S|}\ ,
\end{aligned}
\end{equation*}
Where $m = \|\sum_{v_s\in S} [\grad(\mathbf{z}'_s,y_s) - \grad(\mathbf{z}_s,y_s)] \|_{2}$.\\
For $Err(-z_{i})$, it can be derived following the same proving process as Eq.~\cref{them:ActNt}, but we only remove one data points. In this case, we have:
\begin{equation*}
\begin{aligned}
Err(-z_{i}) &\leq \frac{N^{3} C}{(N\lambda + (N-1)\eigen)^{3}} \cdot \|\ell \left(\mathbf{z}'_i, y_{i}\right)\|_{2}^{2} + \frac{N}{N\lambda + N\sigma_{min}} \cdot \|\ell \left(\mathbf{z}'_i, y_{i}\right)\|_{2}. 
\end{aligned}
\end{equation*}
Combining the two error bounds we have:
\begin{equation}
\begin{aligned}
Err(-v_{i}) \leq& \frac{N^{3}m^2 C}{((N-1)\lambda + (N-|S|)\eigen + \eigen'|S|)^{3}}  + \frac{(N-1)m}{N\lambda + (N-|S|)\eigen + \min(\eigen,\eigen')|S|} \\
&+ \frac{N^{3} C}{(N\lambda + (N-1)\eigen)^{3}} \cdot \|\ell \left(\mathbf{z}'_i, y_{i}\right)\|_{2}^{2} + \frac{N}{N\lambda + N\sigma_{min}}\cdot \|\ell \left(\mathbf{z}'_i, y_{i}\right)\|_{2}
\end{aligned}
\end{equation}

\end{proof}

%-------------------------------------------------------------------------

\section{Group effect of removing multiple edges}
We study the group effect of influence estimation on removing multiple edges. On dataset \textit{Cora}, we randomly sample $k$ edges from the attributed graph, where $k$'s values were chosen increasingly as $2, 10, 50, 100, 200, 350$. Every time, we remove $k$ edges simultaneously and validate their estimated influence. We observe: though with high correlation, our influence plots tend to move downward as more edges are removed at the same time. In this case, our method tends to be less accurate and underestimates the influence of a simultaneously removed group of edges. 

\begin{figure}[H]
\centering
    \includegraphics[width=.8\columnwidth]{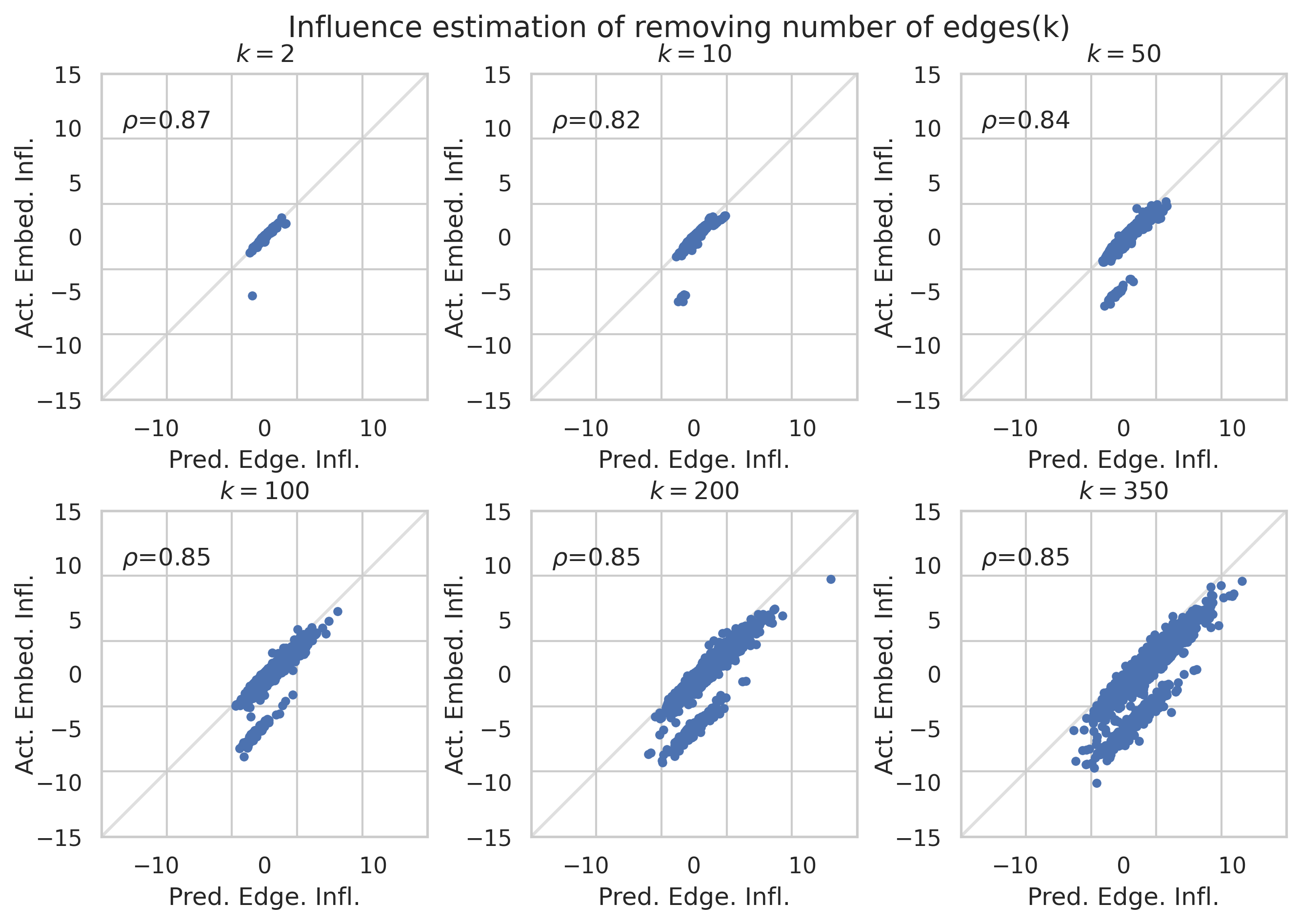}
    \vspace{-4mm}
    \caption{Estimating group influence on \textit{Cora}. The horizontal axes indicate the predicted influence on the validation set, and the vertical axes indicate the actual influence. On each set, we randomly sample $k$ edges $(k $=$2, 10, 50, 100, 200, 350)$ from the graph and repeat this process 5000 times. Each time, we remove k edges simultaneously and validate our influence estimation.}
    \label{fig:groupinfl}
    \vspace{-5mm}
\end{figure}

%-------------------------------------------------------------------------

\section{Validating influence for artificially added edges}
In this section, we validate our influence estimation for artificially added edges on dataset \textit{Cora}, \textit{Pubmed}, and \textit{Citeseer}. We randomly sample 10000 unconnected node pairs on each dataset, add an artificial edge between them, and validate its influence estimation. \cref{fig:addinfl} shows that the estimated influence correlates highly with the actual influence. This demonstrates that our proposed method can successfully evaluate the influence of artificially added edges. 
\begin{figure}[H]
\centering
    \includegraphics[width=.8\columnwidth]{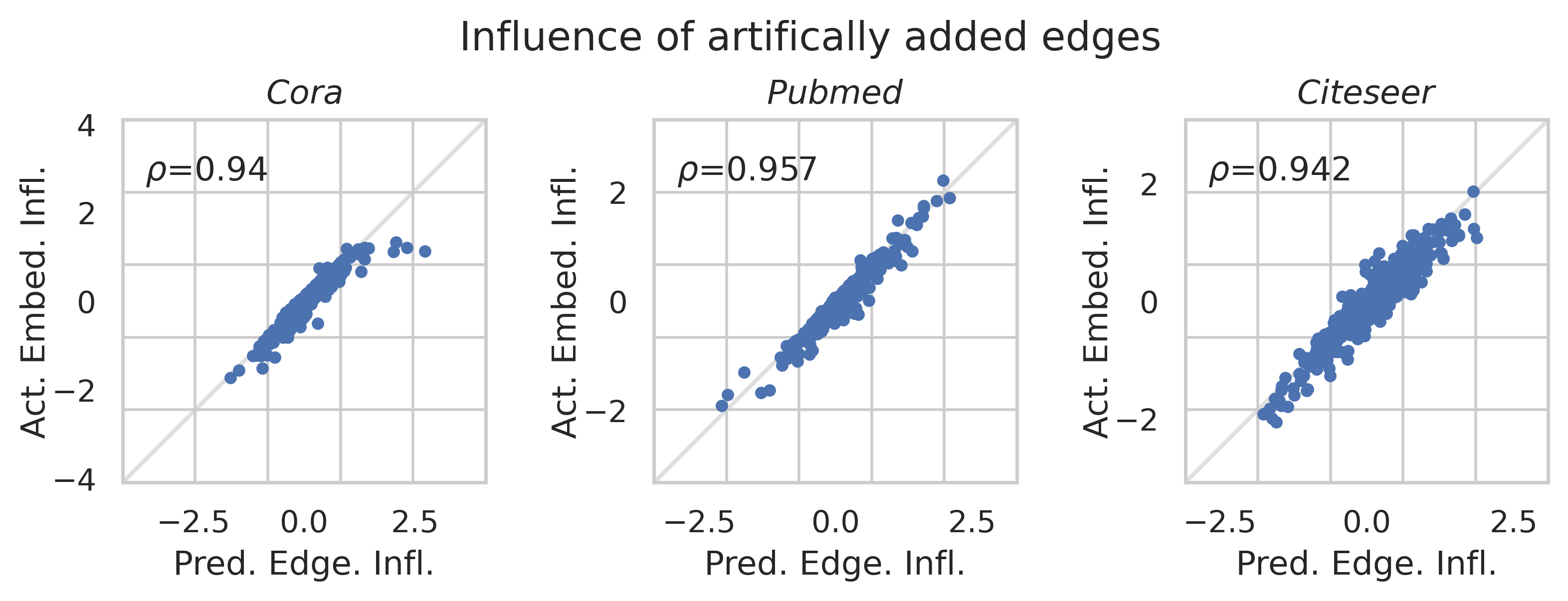}
    \vspace{-3mm}
    \caption{Estimated influence vs. actual influence on artificially added edges. Three datasets are used in this illustration \textit{Cora}, \textit{Pubmed}, and \textit{Citeseer}. Due to the high time complexity of evaluating the influence on every pair of nodes, we randomly sample 10000 node pairs and add artificial edge. }
    \label{fig:addinfl}
    \vspace{-5mm}
\end{figure}

%-------------------------------------------------------------------------

\section{Running time comparison}
We present the running time comparison between calculating the edge influence via the influence-based method and retrieving the actual edge influence via retraining. We conduct our experiment on dataset \textit{Cora}, \textit{Pubmed}, and \textit{Citeseer}. We demonstrate our method is 15-25 faster than the retrained method. Notably, for tasks like improving model performance or carrying out adversarial attacks via edge removal, it could save a considerable amount of time in finding the edge to be removed with the lowest/largest influence. 

\begin{table}[H]
\centering
\caption{Running time comparisons for edge removal by second. Self-loop edges are not recorded.}\label{tab:runtime}
\small
\begin{tabular}{l|cc|cc}
\toprule
    Dataset \ \ \ \ \ \ \  & Infl. (single edge) & Infl. (all edges) & Retrain (single edge) & Retrain (all edges) \\
    \midrule
    \textit{Cora} & 0.0049\scriptsize$\pm$0.0006 & 24.86 & 0.0683\scriptsize$\pm$0.0216 & 370.80 \\
    \textit{Pumbed} & 0.0008\scriptsize$\pm$0.0001 & 34.58 & 0.0203\scriptsize$\pm$0.0044 & 899.62 \\
    \textit{Citeseer}  & 0.0097\scriptsize$\pm$0.0008 & 45.90 & 0.1578\scriptsize$\pm$0.0404 & 746.47 \\
\bottomrule
\end{tabular}
\end{table}

\section{Extend influence method to other GNN models}
Theoretically, our current pipeline can be extended to other nonlinear GNNs under some violation of assumption. (1) According to Propositions~\cref{prop:infledge} and ~\cref{prop:inflnode}, we require the existence of the inverse of the Hessian matrix, which is based on the assumption that the loss function on model parameters is strictly convex. Under the context of some GNN models with non-linear activation functions, we can use the pseudo-inverse of the hessian matrix instead. (2) For non-convex loss functions of most GNN, our proposed error bound in Theorem~\cref{them:ActNt} does not hold unless a large regularization term is applied to make the hessian matrix positive definite. From the implementation purpose, (1) From the implementation perspective, the non-linear models usually have more parameters than the linear ones, which require more space to store the Hessian matrix. Accordingly, the calculation of the inverse of the Hessian matrix might be out of memory. It needs to reformulate the gradient calculation and apply optimization methods like Conjugate gradient for approximation. (2) Our current pipeline is constructed based on mathematical, hands-on derived gradients adopted from \cite{NEURIPS2019_a78482ce}. Existing packages like PyTorch use automatic differentiation to get the gradients on model parameters. It could be inaccurate for second-order gradients calculation. Extending the current pipeline to other GNNs may require extensive first and second-order gradient formulations. We will explore more GNN influence in the future. 

\end{document}